\newcommand{\sign}{\text{sign}}
\newtheorem{theorem}{Theorem}
\title{
Dynamic Median Consensus for Marine Multi-Robot Systems Using Acoustic Communication}
\author{Goran Vasiljevi{\'{c}}$^{1}$, Tamara Petrovi{\'{c}}$^{1}$, Barbara Arbanas$^{1}$ and Stjepan Bogdan$^{1}$
\thanks{Manuscript received: February, 24, 2020; Revised May, 27, 2020; Accepted June, 17, 2020.}
\thanks{This paper was recommended for publication by Editor Jonathan Roberts upon evaluation of the Associate Editor and Reviewers' comments. This work was in part supported by the EU-H2020-FETPROACT-2-2014 project subCULTron - Submarine Cultures Perform LongTerm Robotic Exploration of Unconventional Environmental Niches, grant agreement No. 640967 and EU-H2020 CSA project AeRoTwin - Twinning coordination action for spreading excellence in Aerial Robotics, grant agreement No. 810321. The work of B. Arbanas was supported in part by the “Young Researchers' Career Development Project–Training of Doctoral Students” of the Croatian Science Foundation funded by the European Union from the European Social Fund.}
\thanks{$^{1}$Goran Vasiljevi{\'{c}}, Tamara Petrovi{\'{c}}, Barbara Arbanas and Stjepan Bogdan are with University of Zagreb
Faculty of Electrical Engineering and Computing, Laboratory for Robotics and Intelligent Control Systems (LARICS), Unska 3, Zagreb 10000, Croatia; 
        {\tt\small goran.vasiljevic@fer.hr}}
\thanks{Digital Object Identifier (DOI): see top of this page.}
}
\begin{document}

\maketitle
\thispagestyle{empty}
\pagestyle{empty}
\maxdeadcycles=20000
\begin{textblock*}{14.9cm}(3.2cm,0.75cm) %
	{\footnotesize © 2020 IEEE.  Personal use of this material is permitted.  Permission from IEEE must be obtained for all other uses, in any current or future media, including reprinting/republishing this material for advertising or promotional purposes, creating new collective works, for resale or redistribution to servers or lists, or reuse of any copyrighted component of this work in other works.}
\end{textblock*}

\begin{abstract}
In this paper, we present a dynamic median consensus protocol for multi-agent systems using acoustic communication. The motivating target scenario is a multi-agent system consisting of underwater robots acting as intelligent sensors, applied to continuous monitoring of the state of a marine environment. The proposed protocol allows each agent to track the median value of individual measurements of all agents through local communication with neighbouring agents. Median is chosen as a measure robust to outliers, as opposed to average value, which is usually used. In contrast to the existing consensus protocols, the proposed protocol is dynamic, uses a switching communication topology and converges to median of measured signals. Stability and correctness of the protocol are theoretically proven. The protocol is tested in simulation, and accuracy and influence of protocol parameters on the system output are analyzed. The protocol is implemented and  validated by a set of experiments on an underwater group of robots comprising of aMussel units. This experimental setup is one of the first deployments of any type of consensus protocol for an underwater setting. Both simulation and experimental results confirm the correctness of the presented approach.

\end{abstract}
\begin{IEEEkeywords}
Sensor Networks, Networked Robots, Marine Robotics
\end{IEEEkeywords}

\section{Introduction}
\IEEEPARstart{O}{ne} of the envisioned goals of the subCULTron project was to develop a marine multi-robot system for intelligent long-term monitoring of underwater ecosystems \cite{subcultron}. The underwater system is comprised of 3 different types of robots (Figure \ref{fig:subCULTron_robots}). Artificial mussels (aMussels) are sensor hubs attached to the sea bottom, which monitor natural habitat, including biological agents like algae, bacterial incrustation, and fish. They serve as the collective long-term memory of the system, allowing information to persist beyond the runtime of other agents, enabling the system to continue developing from previously learned states \cite{Arbanas20182}, \cite{arbanas2018}. On the water surface, artificial lily pads (aPads) interface with humans, delivering energy and information influx from ship traffic or satellite data \cite{babic2018}. Between those two layers, artificial fish (aFish) move, monitor and explore the environment and exchange information with aMussels and aPads. 

\begin{figure}
    \centering
    \subfloat{
	    \includegraphics[height=0.25\linewidth]	{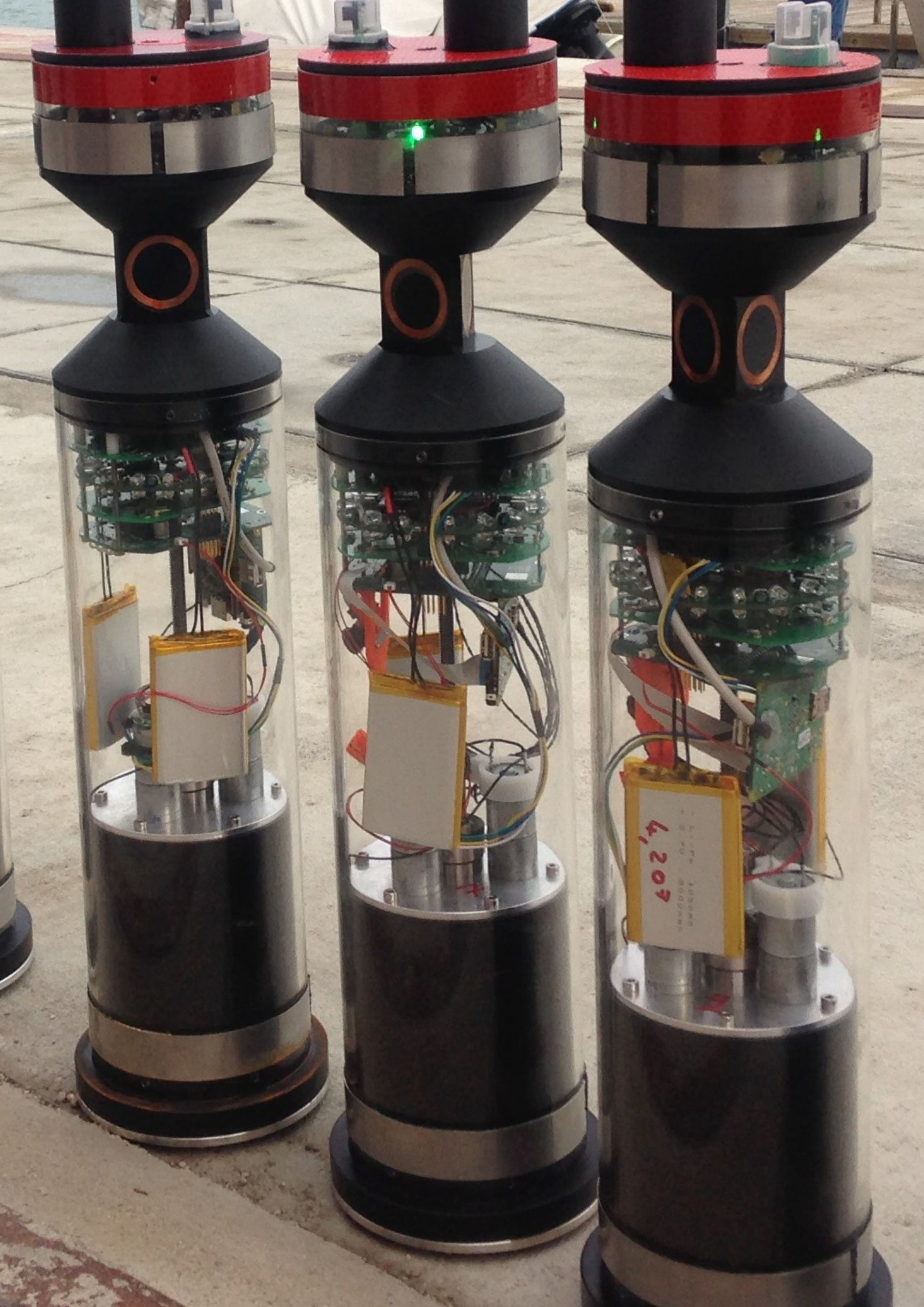}}
    \subfloat{
	    \includegraphics[height=0.25\linewidth]{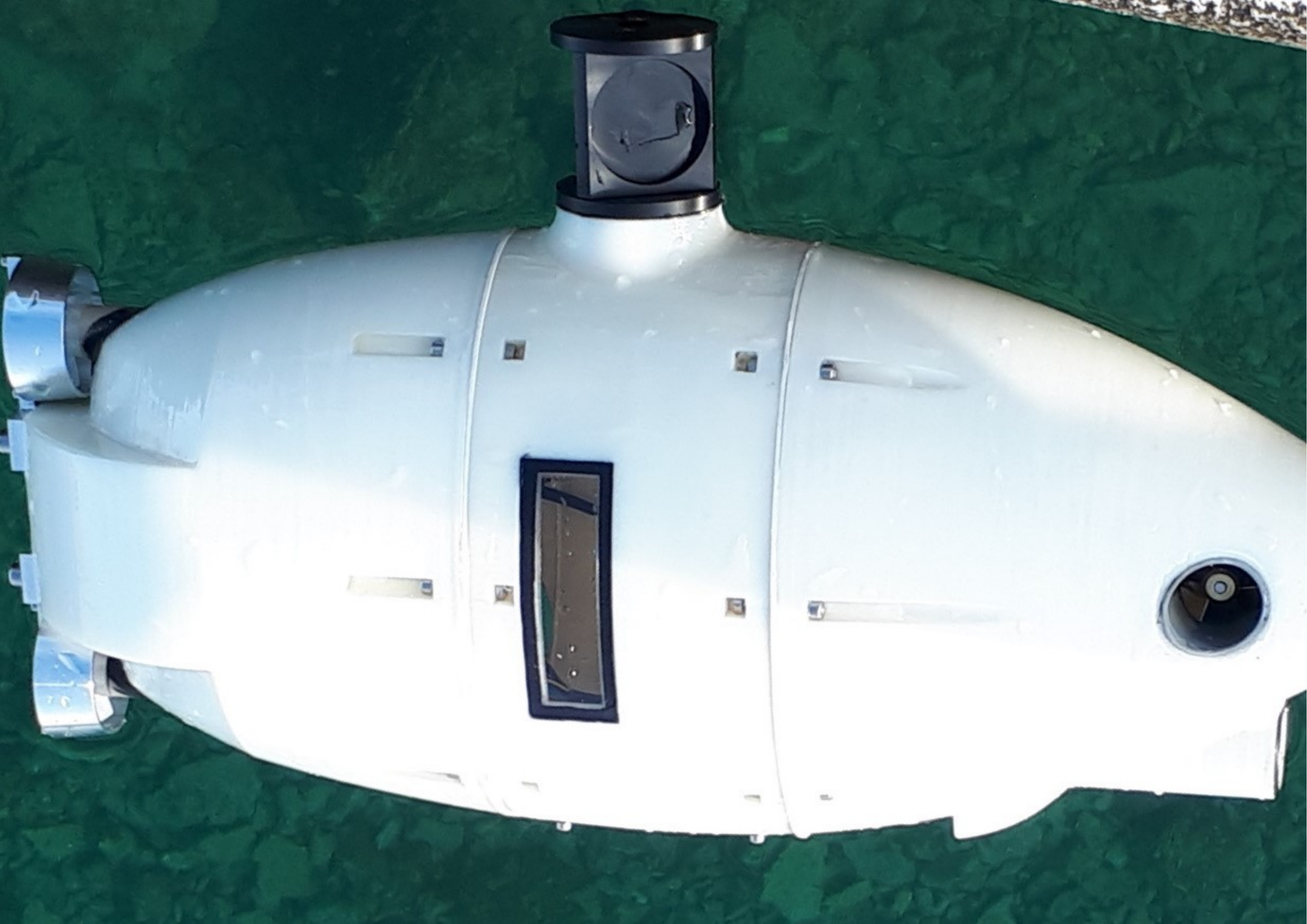}}
    \subfloat{
	\includegraphics[height=0.25\linewidth,trim=270 200 200 200, clip]{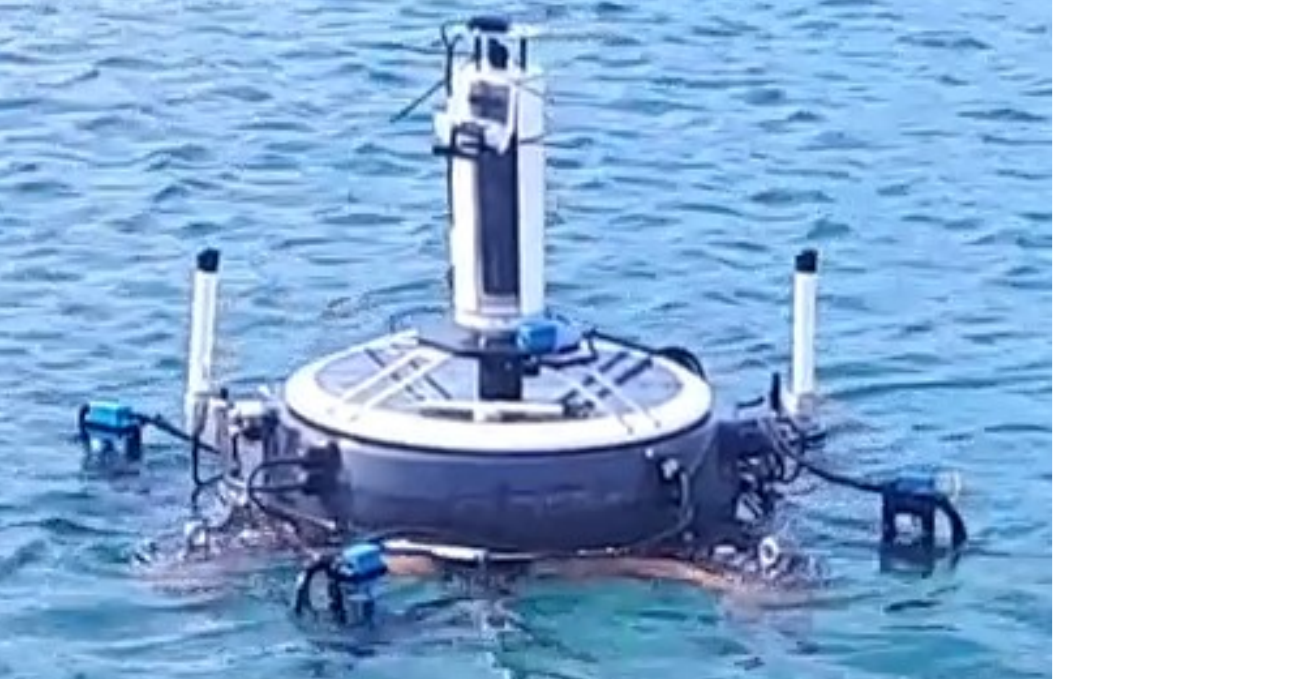}}
    \caption{Robots of the subCULTron swarm -- aMussels (left), aFish (middle) and aPad (right)}
  \label{fig:subCULTron_robots}
\end{figure}
An underwater swarm consisting of a large number of units (100 aMussels, 10 aFish and 5 aPads) was developed within the project.  The ability of such a system to  compute (in a decentralized manner) common estimates of unknown quantities (such as measurements), and agree on a common view of the world, is critical. Consensus protocols (algorithms)  are particularly compelling for implementation in multi-agent systems due to their simplicity and a wide range of applications. Foundation of consensus protocols in multi-agent systems lies in the field of distributed computing. In networks of agents, \emph{consensus} means to "reach an agreement regarding a certain quantity of interest that depends on the state of all agents" \cite{Olfati2007}. A consensus protocol is a series of rules that define information exchange between an agent and all of his neighbors on the network, as well as internal processing of the obtained information by each agent.

A good overview of consensus protocols can be found in \cite{Olfati2007}, \cite{Ren2011}, \cite{Qin2017}. Our primary target is subCULTron system or more precisely, a network of underwater robots measuring environment parameters such as oxygen or turbidity, with sensors prone to faults/errors (outliers). Consensus protocols can be exploited to increase the reliability of such a system by i) developing protocols for detection of faulty agents (such as trust consensus protocols \cite{mazdin2016trust}, \cite{haus2014trust}), or ii) developing consensus protocols that implicitly account for and nullify the influence of outlier values. 

In this paper we apply the later approach - the goal is for each agent to reach a consensus on the real value of the measuring signal. The most common type of consensus problems - average consensus (\cite{Jadbabaie2003}, \cite{ReanBeard2008}) - is not appropriate for this application since even one faulty agent with abnormal values might skew the consensus to incorrect values, which do not represent the real value of the measuring signal. On the other hand, authors in \cite{Franceschelli2014} and \cite{Franceschelli2017} use \textit{median} as the chosen measure, which is inherently robust to outliers, and we follow upon their work in this paper. 

Further, due to the time-varying nature of environment parameters and requirement for spatial distribution of sensors, measurements are performed by a multi-robot system applying the \textit{dynamic} median consensus protocol, where agents track the median of locally available \textit{time-varying} signals, executing local computations and communicating with neighboring agents only \cite{Kia2019}. Dynamic protocol for the average case is available, for both static and varying communication topology (\cite{Zhu2008}).

Work presented in this paper is based upon \cite{Franceschelli2014} and \cite{Franceschelli2017}, where authors present a dynamic median consensus protocol. In this paper we modify their approach so that protocol is functional under scheduled communication protocol, like the one using acoustics for underwater communication. Systems with such a communication scheme are called \emph{switching systems} and their analysis is more complex compared to analysis of static systems. Our previous work \cite{Arbanas20182} and \cite{arbanas2018} deals with consensus using switching systems, but the work presented in this paper reaches the median value rather than the average. Another difference, compared to our previous work, is addition of the dynamic component to the consensus protocol. To conclude, \textbf{the main contribution} of the paper is analysis of novel consensus protocol that is dynamic, works on a switching communication topology and converges to median value, a combination that (to the best of our knowledge) no other papers study. 

There have not been many advancements in the area of underwater consensus protocols. As far as we know, the only underwater consensus applications include formation control of tethered and untethered underwater vehicles \cite{Joordens2010, Putranti2016, Mirzaei2016} and tracking of underwater targets using acoustic sensor networks (ASN) \cite{Yan2017}. Both of those approaches have been validated only in the simulation environment, using ideal communication channels. Among other things, this paper contributes to the field by providing the first experimental application of a consensus protocol in an underwater multi-agent system, acting as a distributed sensor network.

The paper is organized as follows. In the next section, we define some preliminary definitions and notations regarding the systems we study. We present the implemented method for dynamic median consensus over scheduled acoustic communication in Section \ref{sec:consensus}. Simulation results and analysis are presented in Section \ref{sec:simulation}, while the section \ref{sec:experimental} describes the robotic platform we used for experiments and shows the achieved results. Finally, we give a conclusion in Section \ref{sec:conclusion}.
\section{Preliminaries}
\label{sec:prelim}

\subsection{Problem description}
We consider a network of $n$ agents communicating over a single communication channel. The underlying communication graph is defined as a directed graph $\mathbf{G}=(\mathbf{V},\mathbf{E})$, where the set of nodes $\mathbf{V}= \{1,2,..,n\}$ corresponds to agents, and the set of edges $\mathbf{E}\subseteq \mathbf{V} \times \mathbf{V}$ corresponds to communication links between agents. $\mathbf{E}$ is usually described with a corresponding adjacency matrix $\textbf{A}$ such that $a_{ij}=1$ if there is a communication link from agent $j$ to agent $i$, and $a_{ij}=0$ otherwise. 

Each agent $i$ has a local reference signal, denoted
$z_i \in \mathbb{R}, \mathbf{z}=[z_1 z_2 .. z_n]$. In this paper this signal is a measurement made by the agent, but in general it could be the value of some other internal or external variable. Internal state of agent $i$ in step $k$ is denoted as $x_i^k \in \mathbb{R}$. Let us introduce the following notations. 

 A \textit{median value} of vector $\mathbf{z}=[z_1 z_2 ... z_n]$, where elements are listed in ascending order, can be defined as (\cite{Franceschelli2017}):

\begin{equation}
    m(\mathbf{z}) \in
    \begin{cases}
        \{z_{\frac{n+1}{2}}\} & \text{if $n$ is  odd} \\
        [z_{\frac{n}{2}},z_{\frac{n}{2}+1}] & \text{if $n$ is  even} \\
    \end{cases}
    \label{mediandef}
\end{equation}

Agent dynamics in discrete-time domain can  in general be written as:

\begin{equation}
x_i^{k+1} = x_i^k + u_i^k, \quad i \in \{1,2,\ldots,n\}
\label{eq-dynamics}
\end{equation}

where $u_i^k \in  \mathbb{R}$ takes into account both agent's internal  states as well as states of neighbouring agents, which are obtained through communication. We say that the system running (\ref{eq-dynamics}) converges to consensus value $c$ (denoted $x_i^k \xrightarrow{} c$) iff:
 \begin{equation}
    \exists \delta>0,\exists k_0>0, |x_i^k-c|<\delta, \forall i\in \{1,2,\ldots,n\}, \forall k>k_0
    \label{convergence}
\end{equation}

The most common type of such consensus protocols are designed to converge to the \textit{average} value of local reference signals ($z_i$) of all agents. In this paper we deal with \textit{median} consensus protocol, that is, we propose and analyse a protocol such that all the agents converge to the same value, corresponding to the median of their measurements, $x_i^k \xrightarrow{} m(\mathbf{z})$, which means that internal states of all agents $x_i^k$ will converge to the median of their current measurements $\mathbf{z}$.  Further, since the proposed protocol allows individual agents to track a \textit{time-varying}
median of reference signals (measurements), it belongs to a class of \textit{dynamic} consensus protocols.
\begin{table}[]
    \caption{List of variables used}
    \label{tab:variables}
    \centering
    {
    \begin{tabular}{|c|c|}
        \hline
         Variable &  Explanation\\
         \hline\hline
         $x_i^k$& internal state of $i$th agent at step $k$\\ & (communicated over the network)\\
         \hline
         $y_i^k$ & additional internal state of $i$th agent at step $k$ \\ &
         (not communicated over the network)\\
         \hline
         $z_i^k$ & measurement of agent $i$ at step $k$\\
         \hline
         $a_{ij}$ & $1$ if agent $i$ is neighbour of agent $j$, $0$ otherwise\\
         \hline
         $\beta$, $\alpha$, $\gamma$, $\kappa$ & algorithm tuning parameters\\
         \hline
         $n$ & number of agents\\
         \hline
         $r_i$ & number of neighbours of agent $i$\\
         \hline
    \end{tabular}}
\end{table}

\subsection{Communication scheme}
\label{sec:comm}
A communication scheme is such that agents, due to the characteristics of acoustic signals, must take turns transmitting messages. In other words, in order to avoid interference of acoustic signals, one must ensure that agents do not transmit messages in the same time.   
We opt for the simplest solution, which is a sequential assignment of time-slots to agents, repeated cyclically (\emph{round-robin} \cite{OSStallings}). Each time-slot is reserved for sending of messages of one agent. 
\begin{figure}[h!]
    \centering
    \includegraphics[width=\linewidth,trim=4 4 4 4,clip]{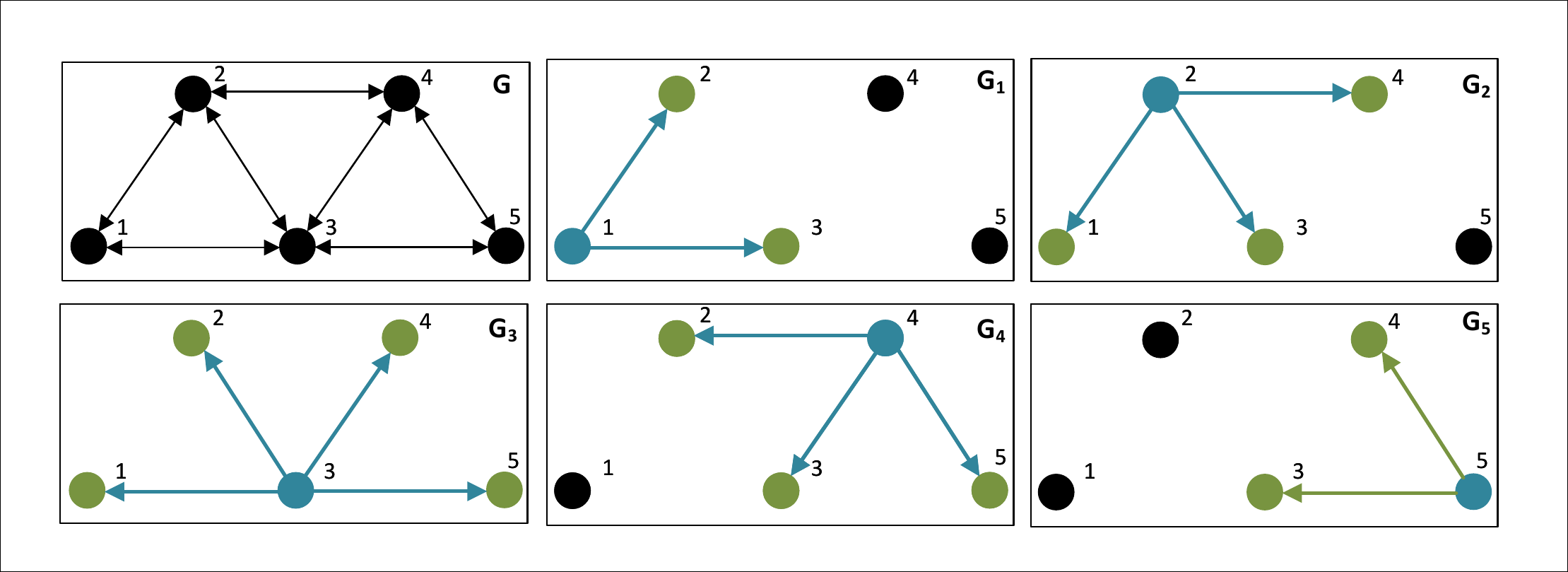}
     \caption{Scheduled acoustic communication - graph $G_i$ for each time-slot and the underlying graph $G$. Sending agent (blue), receiving agents (green).}
    \label{fig:comm_topology}
\end{figure}

The result of the described scheme is a switching communication topology, as given in Figure \ref{fig:comm_topology}. The underlying communication graph $\mathbf{G}$ is an union of communication graphs $\mathbf{G_i}$, $i \in \{1,\ldots, n\}$ over one cycle of described round-robin schedule.  The same communication scheme was used in \cite{Arbanas20182} and \cite{arbanas2018}, where it was more thoroughly elaborated.  Other approaches might be used as well and do not affect the outcome of the proposed protocol.

\section{Dynamic median consensus}
\label{sec:consensus}

We propose the following decentralized protocol, defined using a local update rule of each agent:
\begin{equation}
\begin{split}
    x_i^{k+1}&=x_i^k+a^k_{ij}(\beta(x_j^k-x_i^k)+\frac{\alpha}{r_i} \sign(z_i^k-x_i^k)+y_i^k)\\
    y_i^{k+1}&=y_i^k+a^k_{ij}(\gamma((x_j^k-x_i^k)-\kappa y_i^k))\\
\end{split}
\label{eq:states}
\end{equation}

where $\beta$, $\alpha$, $\gamma$ and $\kappa$ are tuning parameters, $y_i^k$ is an additional internal state of agent $i$ in step $k$, $x_j^k$ is the value of agent $j$ which is transmitting information over communication channel in step $k$ (when $a^k_{ij} \neq 0$) and $r_i$ is total number of neighbours of agent $i$, obtained from graph $\mathbf{G}$ (including agent $i$ itself). List of the variables used is shown in Table \ref{tab:variables}. Initial value is $x_i^0 = z_i^0$. 
 With respect to other median-reaching algorithms (\cite{Franceschelli2014}, \cite{Franceschelli2017}), the novelty is introduction of the internal state variable $y_i$, which is needed to ensure convergence for the underlying switching topology. The signum component of the local rule, characteristic for median-reaching consensuses, is here specified to account for number of neighbor agents ($r_i)$, which is as well needed to ensure stability due to switching nature of the system.

\begin{theorem}
\label{theorem1}
For a local rule of each agent defined as \eqref{eq:states}, and if $y_i \in [-\frac{2\alpha}{r_i},\frac{2\alpha}{r_i}]$, $\gamma<\beta$$ <\frac{1}{n^2}$, and  $\kappa\gamma<1$,
all of the agents converge to the value $m(\mathbf{z})$.
\end{theorem}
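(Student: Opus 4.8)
The plan is to treat \eqref{eq:states} as a nonsmooth switching system and establish convergence in the practical sense of \eqref{convergence}. First I would stack the agent states into vectors $\mathbf{x}^k = [x_1^k, \ldots, x_n^k]^\top$ and $\mathbf{y}^k = [y_1^k, \ldots, y_n^k]^\top$ and rewrite the update in matrix form for each time-slot graph $\mathbf{G_i}$. Since the round-robin schedule activates exactly one transmitter per slot, each per-slot map is a low-rank perturbation of the identity plus the sign forcing, so the natural object of study is the composite map over one full cycle of length $n$. The bound $\beta < \frac{1}{n^2}$ is precisely the small-gain condition one expects here: each of the $n$ slots contributes an $O(\beta)$ correction to the disagreement, and the product of these corrections over a cycle must remain a contraction on the consensus subspace.

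Second, I would show that the box $y_i \in [-\frac{2\alpha}{r_i}, \frac{2\alpha}{r_i}]$ is positively invariant. On an active edge the auxiliary update reads $y_i^{k+1} = (1-\gamma\kappa)y_i^k + \gamma(x_j^k - x_i^k)$, a first-order recursion that is contractive exactly when $\kappa\gamma < 1$; bounding the disagreement $x_j^k - x_i^k$ by the diameter of the $x$-states and using the $\frac{\alpha}{r_i}$ scaling of the signum term then confines $y_i$ to the stated box. This invariance is needed both to make the subsequent Lyapunov estimate uniform and to control the steady-state offset the integral state must cancel.

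The core of the argument is a Lyapunov function combining the consensus error and the auxiliary state, for instance $V^k = \|\mathbf{x}^k - c\mathbf{1}\|^2 + \theta\|\mathbf{y}^k - \mathbf{y}^*\|^2$ for a suitable weight $\theta$ and offset $y_i^* = -\frac{\alpha}{r_i}\sign(z_i^k - c)$. Summing the one-step differences over a complete cycle and invoking the monotonicity (sector) property of the signum nonlinearity, $(\sign(z_i-a)-\sign(z_i-b))(a-b)\le 0$, I would prove $V^{k+n} - V^k \le 0$, strict outside a neighbourhood of equilibrium, with $\gamma < \beta < \frac{1}{n^2}$ and $\kappa\gamma < 1$ ensuring the cross terms coupling the proportional, integral, and sign parts are dominated by the diagonal decay. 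This yields ultimate boundedness of $\mathbf{x}^k$ inside a ball around $c\mathbf{1}$, as required by \eqref{convergence}. To identify $c$ with the median I would use the stationarity conditions: forcing the $x$-increment to vanish gives $y_i = -\frac{\alpha}{r_i}\sign(z_i^k-c)$, and aggregating these conditions over all agents across a cycle makes the net forcing $\sum_i \sign(z_i^k - c)$ vanish, which by \eqref{mediandef} holds iff $c \in m(\mathbf{z})$, the odd and even cases being handled separately.

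I expect the main obstacle to be the interaction of the switching topology with the nonsmooth signum term in the Lyapunov step: unlike static PI-consensus, no single per-slot quadratic form need decrease, so the decrease must be argued over the whole cycle, and the residual chattering of the sign term near equilibrium is exactly what precludes an exact fixed point (one checks that $x$-stationarity demands $y_i \neq 0$ while $y$-stationarity demands $y_i = 0$) and forces the practical-convergence formulation of \eqref{convergence} rather than exact agreement.
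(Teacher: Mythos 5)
Your high-level skeleton is actually close to the paper's: it too proves a Lyapunov-type decrease that is strict only \emph{outside} a neighbourhood of consensus, it too analyses stationarity over one full round-robin cycle ($x_i^{k+n}=x_i^k$, $y_i^{k+n}=y_i^k$), it too reduces identification of the limit to the vanishing of a sign-sum $\sum_i \sign(z_i-x_i^k)\approx 0$, and your remark that $x$-stationarity forces $y_i\neq 0$ while $y$-stationarity forces $y_i=0$ is precisely the chattering/no-exact-fixed-point phenomenon the paper records. Where you genuinely diverge is the Lyapunov function: the paper uses the nonsmooth diameter $V=x^k_{max}-x^k_{min}+y^k_{max}-y^k_{min}$, which is well matched to the per-slot updates (each receiver's new state is a convex combination plus a bounded forcing, so max/min spreads are easy to bound), whereas you propose a quadratic form centred at $c\mathbf{1}$ and $\mathbf{y}^*$. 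That choice makes the switching itself the hard part: per slot the active graph is a directed star (the transmitter does not move, the receivers do), and quadratic disagreement is not monotone under such asymmetric updates; everything then hinges on your per-cycle ``small-gain'' contraction claim, which you assert rather than prove, and which is also where you locate $\beta<\frac{1}{n^2}$. Nothing in your sketch actually produces that constant; in the paper it comes from a separate worst-case steady-state computation that uses symmetry of $\mathbf{A}$ --- a hypothesis your aggregation step also needs (to cancel $\sum_i\sum_j a_{ij}(x_j-x_i)$) but never states.

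The concrete gap is your second step. The box $y_i\in[-\frac{2\alpha}{r_i},\frac{2\alpha}{r_i}]$ is \emph{not} positively invariant, and $\kappa\gamma<1$ does not make it so: on an active edge the recursion $y_i^{k+1}=(1-\gamma\kappa)y_i^k+\gamma(x_j^k-x_i^k)$ is indeed a contraction, but it contracts toward $(x_j^k-x_i^k)/\kappa$, not toward $0$, so the box is invariant only while $|x_j^k-x_i^k|\le \frac{2\kappa\alpha}{r_i}$. Far from consensus --- exactly the regime a Lyapunov argument must handle --- this fails badly; with the paper's own experimental numbers ($\alpha=3$, $\kappa=0.1$, $r_i=5$, initial spreads of tens of hPa) the required bound $|x_j^k-x_i^k|\le 0.12$ is violated from the first step and $y_i$ leaves the box. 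This is why the paper states $y_i\in[-\frac{2\alpha}{r_i},\frac{2\alpha}{r_i}]$ as a hypothesis of Theorem~\ref{theorem1} and carries it as a standing assumption rather than proving it. Since you explicitly rely on the invariance ``to make the subsequent Lyapunov estimate uniform,'' the argument as written collapses in the transient; to repair it you must either assume the bound throughout (as the paper does) or replace the box by a state-dependent bound of order $(x^k_{max}-x^k_{min})/\kappa$ and redo the per-cycle decrease estimate with it, which changes the constants in your contraction condition and must be reconciled with the claimed role of $\beta<\frac{1}{n^2}$.
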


\begin{proof}
In order to simplify the equations, let us introduce the following notation 
$\max\limits_{i \in  \mathbf{V}}x_i^k =x^k_{max}$,  
$\min\limits_{i \in  \mathbf{V}}x_i^k =x^k_{min}$,  
Similar as in \cite{Franceschelli2017}, we consider a discrete non-smooth Lyapunov candidate:
\begin{equation}
    V(\mathbf{x^k},\mathbf{y^k})=x^k_{max}-x^k_{min}+y^k_{max}-y^k_{min}
\end{equation}

In order for a system to be stable, the following conditions need to be met:
\begin{equation}
\begin{split}
 V(0)&=0\\
 V(\mathbf{x}^k,\mathbf{y}^k)&>0, \forall \mathbf{x}^k, \mathbf{y}^k \neq \mathbf{0} \\
 dV=V(\mathbf{x}^{k+1},\mathbf{y}^{k+1})-V(\mathbf{x}^k, \mathbf{y}^k )&\leq 0, \forall \mathbf{x}^k, \mathbf{y}^k
\end{split}
\label{eq:lyapunov}
\end{equation}
The first two conditions in \eqref{eq:lyapunov} are always satisfied. The second equation could have value $0$ in case when all $x_i$ and all $y_i$ have the same value. However, this case will never occur, because in equation \eqref{eq:states}, for $x_i$ to be in consensus, all $y_i$ cannot have the same value. The third condition in \eqref{eq:lyapunov} yields:
\begin{equation}
\begin{split}
    dV = &x^{k+1}_{max}-x_{min}^{k+1} +y_{max}^{k+1}-y_{min}^{k+1}-\\
         -&(x^{k}_{max}-x_{min}^{k} +y_{max}^{k}-y_{min}^{k})
\end{split}    
\label{eq:condition}
\end{equation}

By taking into account local interaction rule \eqref{eq:states}, the worst case, from the stability point of view, can be written as:
\begin{equation}
    \begin{split}
        x_{max}^{k+1}&=x^{k}_{max}+\beta(x_j^k-x^{k}_{max})+\frac{\alpha}{r_{min}}+y^{k}_{max}\\
        x_{min}^{k+1}&=x^{k}_{min}+\beta(x_j^k-x^{k}_{min})-\frac{\alpha}{r_{min}}+y^{k}_{min}\\
        y_{max}^{k+1}&=y^{k}_{max}+\gamma((x_j^k-x^{k}_{min})-\kappa y^{k}_{max}))\\
        y_{min}^{k+1}&=y^{k}_{min}+\gamma((x_j^k-x^{k}_{max})-\kappa y^{k}_{min}))\\
    \end{split}
\label{eq:raspisano}
\end{equation}

By substituting \eqref{eq:raspisano} in \eqref{eq:condition}, we obtain the following:
\begin{equation}
      dV=(1-\gamma\kappa)(y^{k}_{max}-y^{k}_{min}) -(\beta-\gamma)(x^{k}_{max}-x^{k}_{min})
    +2\frac{\alpha}{r_{min}}
    \label{eq:skraceno}
\end{equation}

For a system to be asymptotically stable $dV<0$ has to be satisfied. Given $\beta > \gamma$, and, as stated in Theorem \ref{theorem1}, ${y_{max}}$  and $y_{min}$ are limited to $[-\frac{2\alpha}{r_{min}},\frac{2\alpha}{r_{min}}]$, with $\kappa\gamma<1$, we get:

\begin{equation}
    x^{k}_{max}-x^{k}_{min}>\frac{6\alpha-4\kappa\gamma\alpha}{r_{min}(\beta-\gamma)}
    \label{eq:uvjet}
\end{equation}

The condition \eqref{eq:uvjet} is the worst possible case, that is, the right side of the inequality is the lowest theoretical bound on the values of $x^{k}_{max}-x^{k}_{min}$. 

When \eqref{eq:uvjet} is true, the energy in the system will dissipate until the condition in \eqref{eq:uvjet} becomes false. In that moment  $x_{max}-x_{min}$ will start increasing until equation \eqref{eq:uvjet} becomes true.  By getting closer to the consensus, $y_i^k$ will converge to either $\frac{\alpha_i}{r_i}$ or $-\frac{\alpha_i}{r_i}$.

Hence, the system is stable under the given condition, however, the system \eqref{eq:states} does not reach a steady state in the classical sense. We assume that the steady state is reached when the values over one communication cycle of length $n$ become stationary, that is, when $x_i^{k+n}=x_i^k, y_i^{k+n}=y_i^k$, $\forall i$, $\forall k > k_0$. From \eqref{eq:states} we get: 

\begin{equation} 
\label{eq:steadystate}
\begin{split}
x_i^{k+n}=x_i^k  &+\beta\sum_{j=1}^{n}{a_{ij}(x_j^{k+j}-x_i^{k+j})}
                      +\sum_{j=1}^{n}{a_{ij}y_i^{k+j}}\\ &+\frac{\alpha}{r_i}\sum_{j=1}^{n}{a_{ij}\sign(z_i-x_i^{k+j})}\\
y_i^{k+n}=y_i^k&+\gamma\sum_{j=1}^{n}{a_{ij}(x_j^{k+j}-x_i^{k+j})}               -\gamma \kappa \sum_{j=1}^{n}{a_{ij}y_i^{k+j}}
\end{split}
\end{equation}

By summing the expressions for $y_i^{k+n}$ in \eqref{eq:steadystate} for all agents $i$  in stationary state we get:

\begin{equation}
\begin{split}
\sum_{i=1}^n\sum_{j=1}^{n}{a_{ij}(x_j^{k+j}-x_i^{k+j})} =\\
\kappa \sum_{i=1}^n \frac{-\alpha}{r_i(1+\beta\kappa)} \sum_{j=1}^{n}{a_{ij}\sign(z_i-x_i^{k+j})}
\end{split}
\label{eq:sumsteadystate}
\end{equation}

In equation \eqref{eq:sumsteadystate} if we assume that during $n$ steps, the value of $\sign(z_i-x_i^{k+j})$ doesn't change for any $i$, the following must hold:

\begin{equation}
\begin{split}
\sum_{i=1}^n\sum_{j=1}^{n}{a_{ij}(x_j^{k+j}-x_i^{k+j})} =
\frac{-\alpha\kappa}{1+\beta\kappa}\sum_{i=1}^n {\sign(z_i-x_i^k)}
\end{split}
\label{eq:sumsteadystate2}
\end{equation}

Under the assumption that the system reaches consensus of all agents, the following holds:

\begin{equation}
\sum_{i=1}^{n} \sum_{j=1}^{n}{a_{ij}(x_j^{k+j}-x_i^{k+j})}\approx 0
    \label{eq:assumption}
\end{equation}
which in turn gives:

\begin{equation}
    \sum_{j=1}^n {\sign(z_i-x_i^{k+j})} \approx0
    \label{eq:sumsignum}
\end{equation}

In the case when all agents have exactly the same value, $c=x_i^k$ $\forall i$, it is clear from \eqref{eq:sumsignum} that $c$ must be the median value of vector $\mathbf{z}$. However, according to the definition of consensus \eqref{convergence}, not all $x_i^k$ need to have exactly the same values. For that reason there is a need for further analysis.

Without the loss of generality we can sort the elements under the sum in the equation \eqref{eq:sumsignum} in the ascending order of their elements $z_i$:

\begin{equation}
    \sum_{i=1}^n {\sign(z_{l(i)}-x_{l(i)}^k)} \approx0
    \label{eq:sortsumsignum}
\end{equation}
where $l(i)$ rearranges indices from \eqref{eq:sumsignum} in such a way that $z_{l(i)}<z_{l(i+1)},\forall i$. It is important to note that each element in the sum in equation \eqref{eq:sumsignum} has corresponding element in the sum in the equation \eqref{eq:sortsumsignum}.

After that we can split the sum into two sums:
\begin{equation}
    \sum_{i=1}^{n/2}\sign(z_{l(i)}-c+\delta)+\sum_{i=n/2+1}^{n}\sign(z_{l(i)}-c-\delta)\approx 0
    \label{eq:splitsumsign2}
\end{equation}
where $x_{l(i)}^k\in [c-\delta,c+\delta]$, as defined in \eqref{convergence}.

The exact value of $c$ depends on the values in vector $\mathbf{z}$, but it is clear that for
\begin{equation}
    c \in [m(z)-\delta,m(z)+\delta]
    \label{eq:rangec}
\end{equation}

the system will converge to the median values of measurements $\mathbf{z}$.

Finally, since matrix $\mathbf{A}$ is symmetrical, every element $(x_j^{k+j_1}-x_i^{k+j_1})$ in the sum in \eqref{eq:sumsteadystate} has its pair element $(x_i^{k+j_2}-x_j^{k+j_2})$, except elements in the diagonal which by default have value 0. Hence, from equations \eqref{eq:steadystate} and \eqref{eq:sumsteadystate}, one can determine $\beta$ from the worst case scenario:
\begin{equation}
    \frac{1}{2}\sum_{i=1}^n\sum_{j=1}^n \frac{\beta \kappa \alpha- (-\beta \kappa \alpha)}{1+\beta \kappa}<\frac{\kappa\alpha}{1+\beta\kappa}
\end{equation}

which leads to:
\begin{equation}
    \beta<\frac{1}{n^2}
\end{equation}

\end{proof}

\section{Simulation results}
\label{sec:simulation}

\subsection{Convergence Analysis}
In this section we present simulation results of the proposed dynamic median consensus protocol. We conducted simulations for system with 3, 5 and 31 agents. In the experimental section, we conducted tests with 3 and 5 agents. A more extensive network of 31 agents is chosen to illustrate the algorithm's performance on a larger scale. For each number of agents, two different communication topologies were used, a complete and chain topology, as given in Figure \ref{fig:sim-E}. These topologies correspond to the best and worst case, since in order to propagate information from agent $i$ to agent $j$ we need only one step for Figure \ref{fig:sim-E1} (best case) and $n$ steps for Figure $\ref{fig:sim-E2}$ (worst case). Graphs given in this figure correspond to the overall communication topology, while topology in each step behaves as given in subsection \ref{sec:comm}.
\begin{figure}[h]
\centering
    \subfloat[Complete topology]{
	    \includegraphics[width=0.33\linewidth, trim=25 25 25 25, clip]{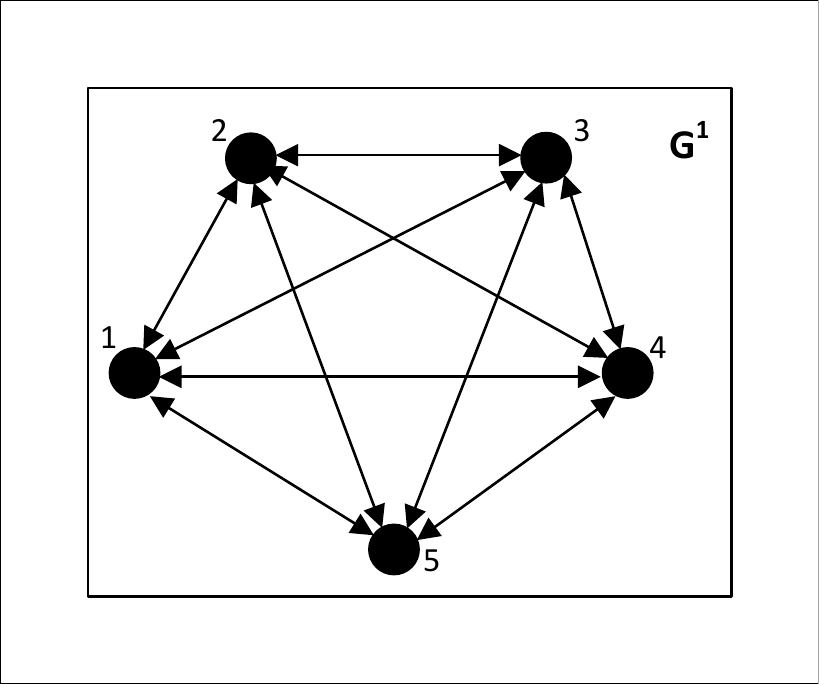}
        \label{fig:sim-E1}
    }
    \subfloat[Chain topology]{
	    \includegraphics[width=0.6\linewidth, trim=25 25 25 25, clip]	{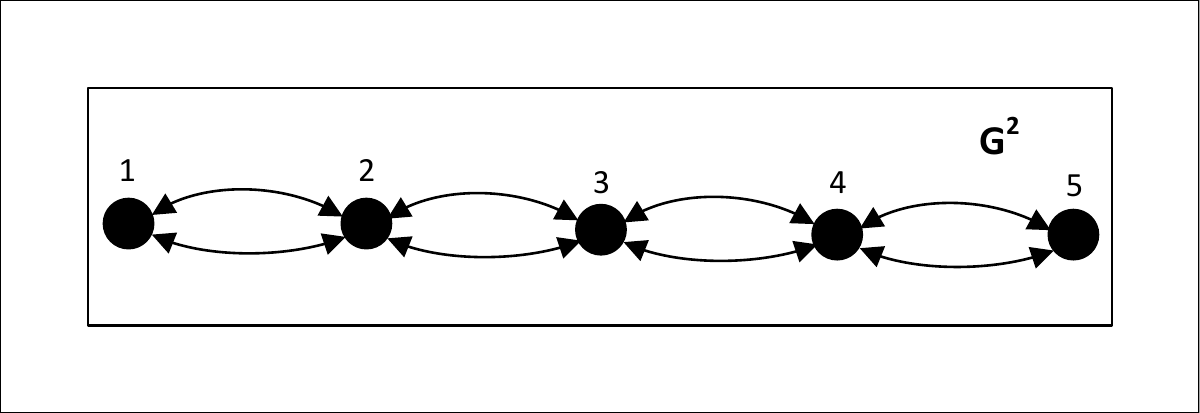}
        \label{fig:sim-E2}
    }
    \caption{Communication topologies used in simulation}
    \label{fig:sim-E}
\end{figure}

Adjacency matrices for the above topologies are:
\begin{gather*}
    \mathbf{A^1}=
    \begin{bmatrix}
    1 & 1 & ... &  1\\
    1 & 1 & ... &  1\\
    ... & ... & ... &...\\
    1 & 1 & ... &  1\\
    \end{bmatrix}
    \\
    \mathbf{A^2}=
    \begin{bmatrix}
    1 & 1 & 0 & 0 & 0 & ... & 0 & 0\\
    1 & 1 & 1 & 0 & 0 & ... & 0 & 0\\
    0 & 1 & 1 & 1 & 0 & ... & 0 & 0\\
    ... & ... & ... & ... & ...& ... & ... & ...\\
    0 & 0 & 0 & 0 & & ... & 1 & 1\\
    \end{bmatrix}
\end{gather*}

Table \ref{tab:parametri} shows the number of agents, consensus tuning parameters (as given in Eq. (\ref{eq:states})) and results, for six different simulation setups. The numerical results are presented  with 3 parameters: $t_s$ settling time (the number of steps needed for all agents to reach value that is within 5\% of measurements median value), $t_c$ convergence time (the number of steps needed for maximal distance between two agents to reach value that is within 5\% of measurements median value) and $\epsilon_{ss}$ error in steady state (value of maximal distance of agents from measurements median value with respect to the median value)
Influence of each of the tuning parameters on the system behaviour is analysed later in this section.

\begin{table}[h!]
\centering
\caption{Tuning parameters and results in each simulation run}\label{tab:parametri}
	\begin{tabular}{|c|c|c|c|c|c|c|}
    \hline
    	Parameter & Sim 1 & Sim 2 & Sim 3 & Sim 4 & Sim 5 & Sim 6\\
        \hline
        \hline        
        $n$ & 3 & 3 & 5 & 5 & 31 & 31 \\
        \hline
        $\alpha$ & 9 & 9 & 3 & 3 & 2 & 2\\
        \hline
        $\beta$ & 0.08 & 0.08 & 0.04 & 0.04 & 0.001 & 0.001\\
        \hline
         $\gamma$ & 0.003 & 0.003 & 0.0015 & 0.0015 & 0.0005 & 0.0005\\
         \hline
         $\kappa$ & 0.1 & 0.1 & 0.1 & 0.1 & 0.1 & 0.1\\
         \hline 
         $\mathbf{A}$ &
         $\mathbf{A^1}$ & $\mathbf{A^2}$ & $\mathbf{A^1}$ & $\mathbf{A^2}$ & $\mathbf{A^1}$ & $\mathbf{A^2}$\\
        \hline
        $t_s$ & 51 & 238 & 110 & 729 & 26856 & 342520 \\
        \hline
        $t_c$ & 53 & 230 & 111& 753 & 11811 & 440231 \\
        \hline
        $\epsilon_{ss}$ & 2.25\% & 2.46\% & 0.39\% & 0.63\%& 0.04\% & 1.6\% \\
        \hline
    \end{tabular}
\end{table}

We set an initial measured value for each agent, and then make a step change of individual measurements so that the overall median changes stepwise. Results for 3 agents are shown in Figures \ref{fig:n3e1} and \ref{fig:n3e2}. Figures show responses of individual agent values $\mathbf{x}$ and their additional internal states $\mathbf{y}$ for different communication topologies. Results  for 5 and 31 agents are given in Figures \ref{fig:n5e1} - \ref{fig:n31e2}. Results show that, after a transient period, the system converges to correct values of median, and is able to dynamically track changes in median values. Convergence time is almost seven times shorter for a complete topology, than for a chain topology. Stationary values of y for each agent $i$ are $\pm\frac{\alpha}{r_i}$, as given in Eq. \eqref{eq:states}.

\begin{figure}[h]
    \centering
	\includegraphics[ width=0.75\linewidth,trim=80 255 80 260, clip]{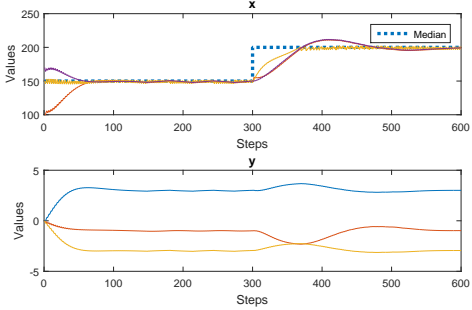}
    \caption{Simulation results for 3 agents (complete topology)}
    \label{fig:n3e1}
\end{figure}

\begin{figure}[h]
    \centering
	\includegraphics[width=0.75\linewidth,trim=80 255 80 260, clip]	{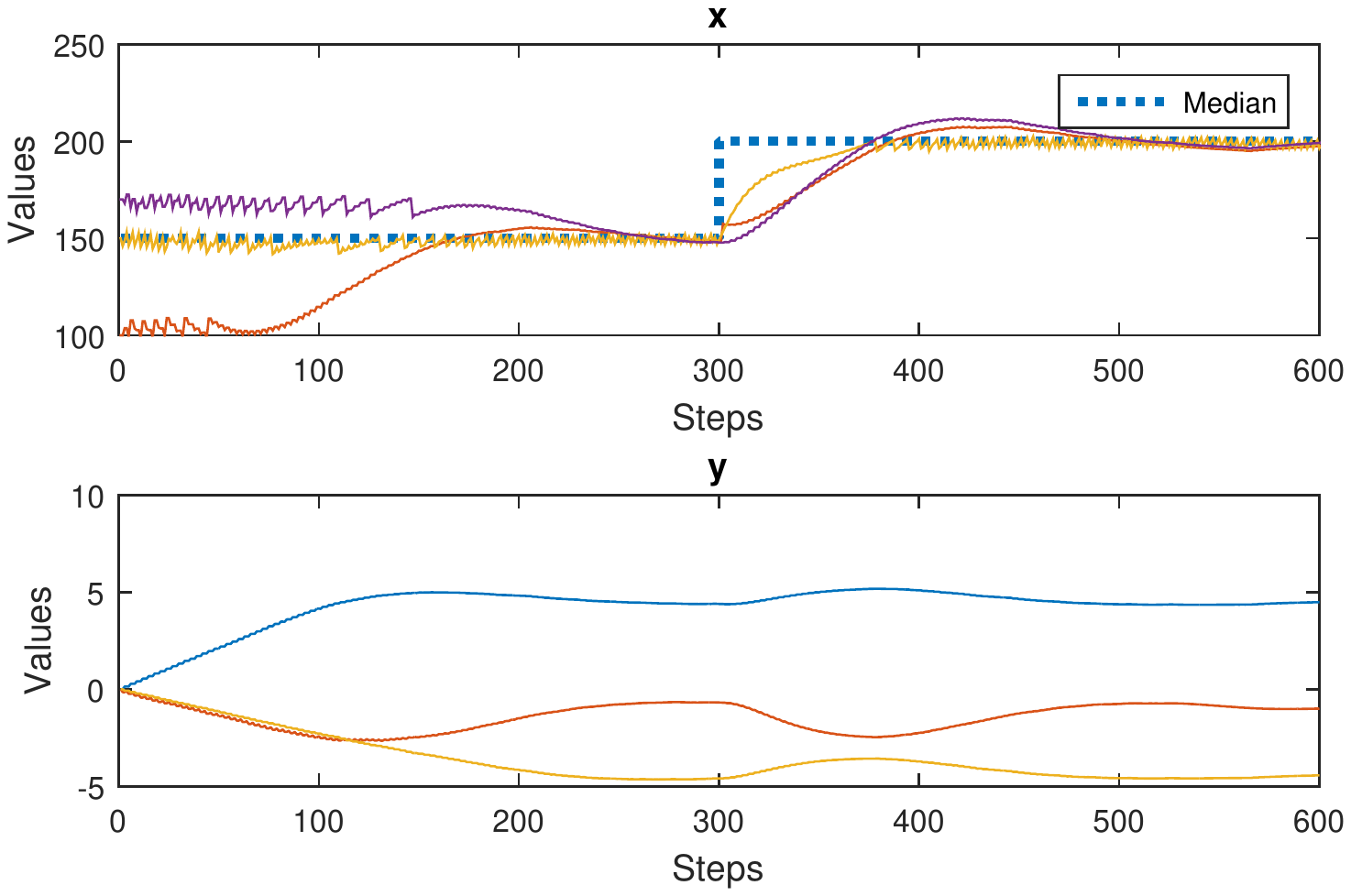}
    \caption{Simulation results for 3 agents (chain topology)}
    \label{fig:n3e2}
\end{figure}

\begin{figure}[h!]
    \centering
	\includegraphics[width=0.75\linewidth,trim=80 255 80 260, clip]{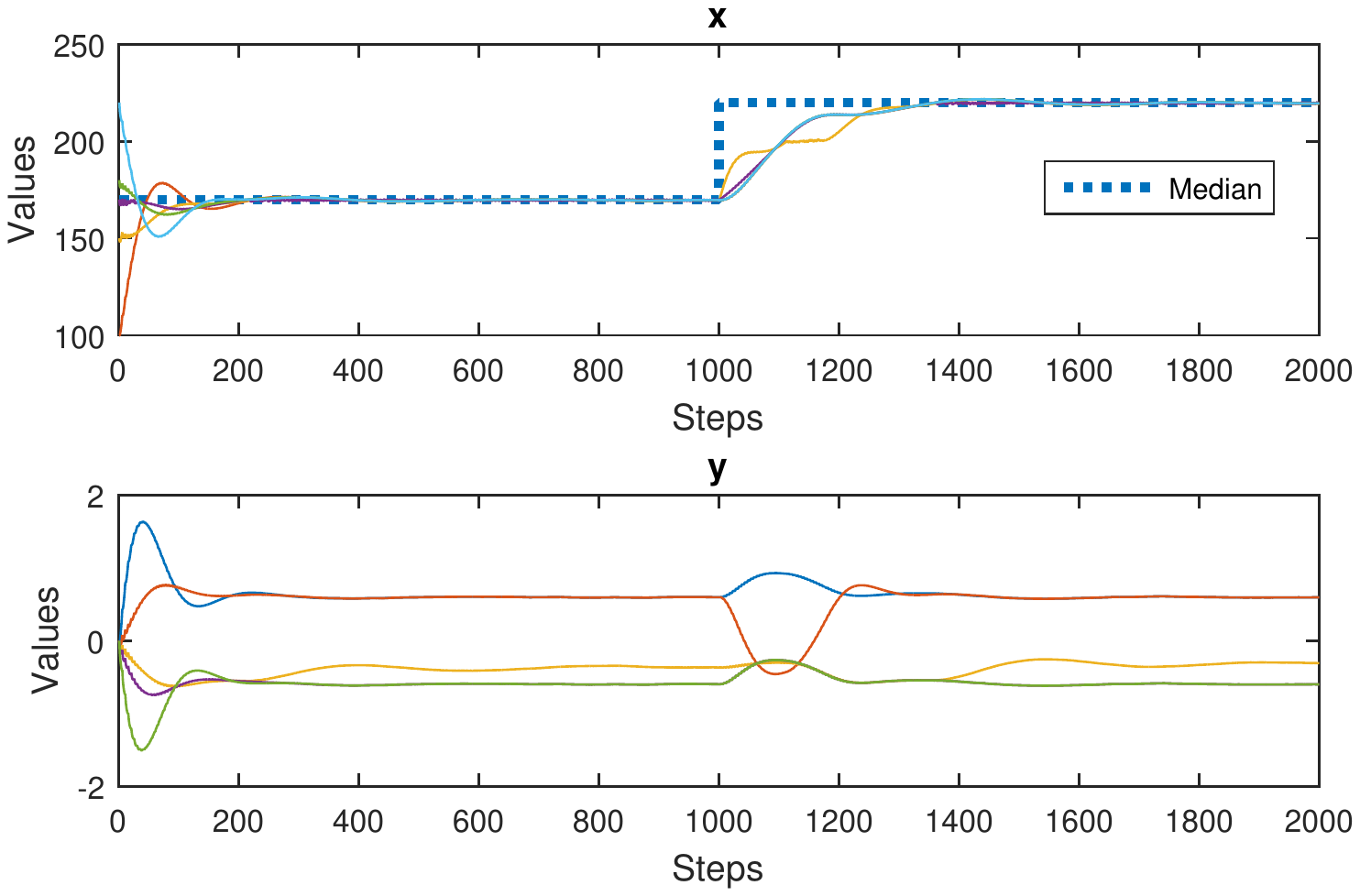}
    \caption{Simulation results for 5 agents (complete topology) }
    \label{fig:n5e1}
\end{figure}

\begin{figure}[h!]
    \centering
    \includegraphics[width=0.75\linewidth,trim=80 255 80 260, clip]{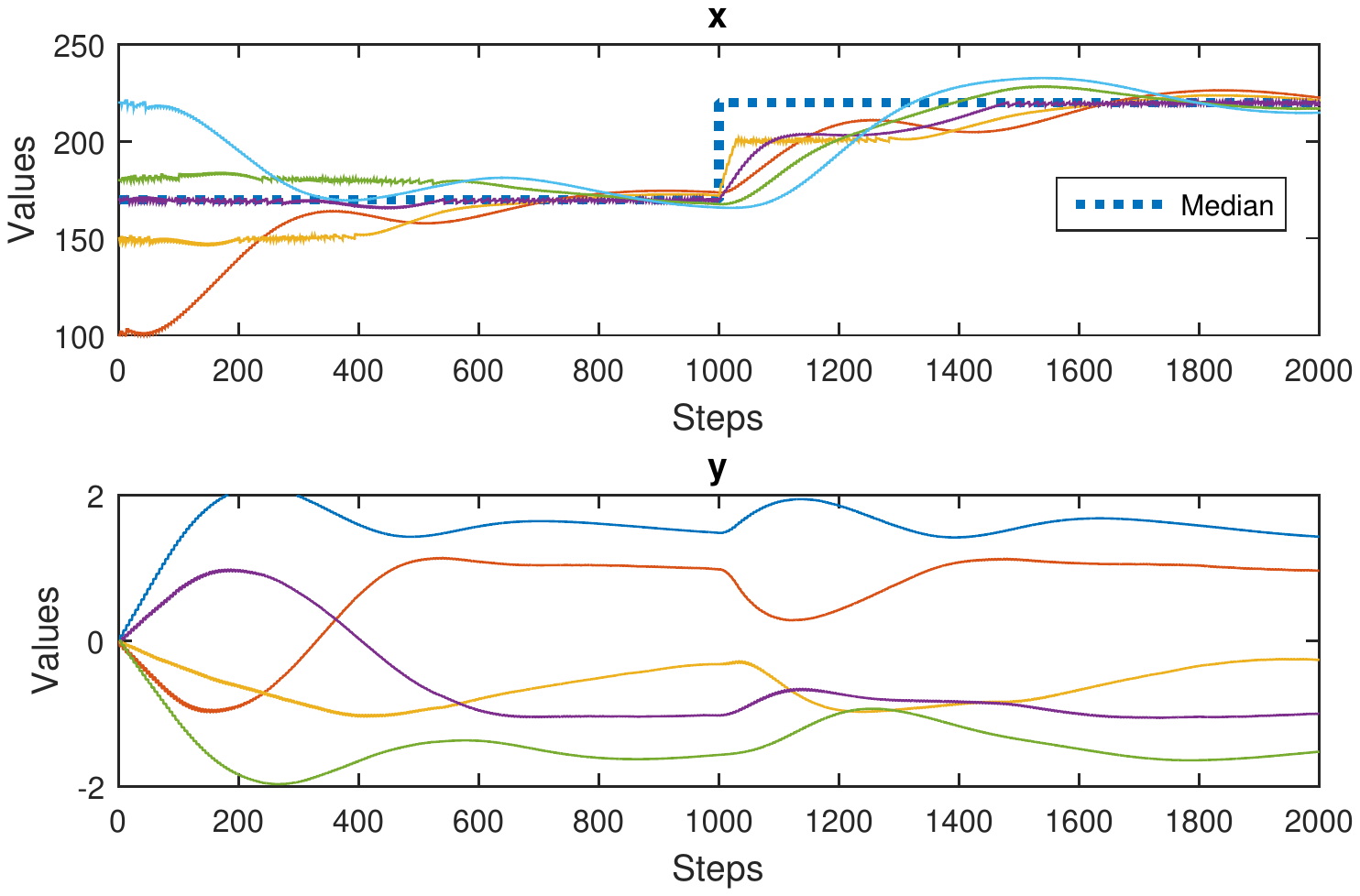}
    \caption{Simulation results for 5 agents (chain topology) }
    \label{fig:n5e2}
\end{figure}

\begin{figure}[h!]
    \centering
	\includegraphics[width=0.75\linewidth,trim=80 285 80 290, clip]{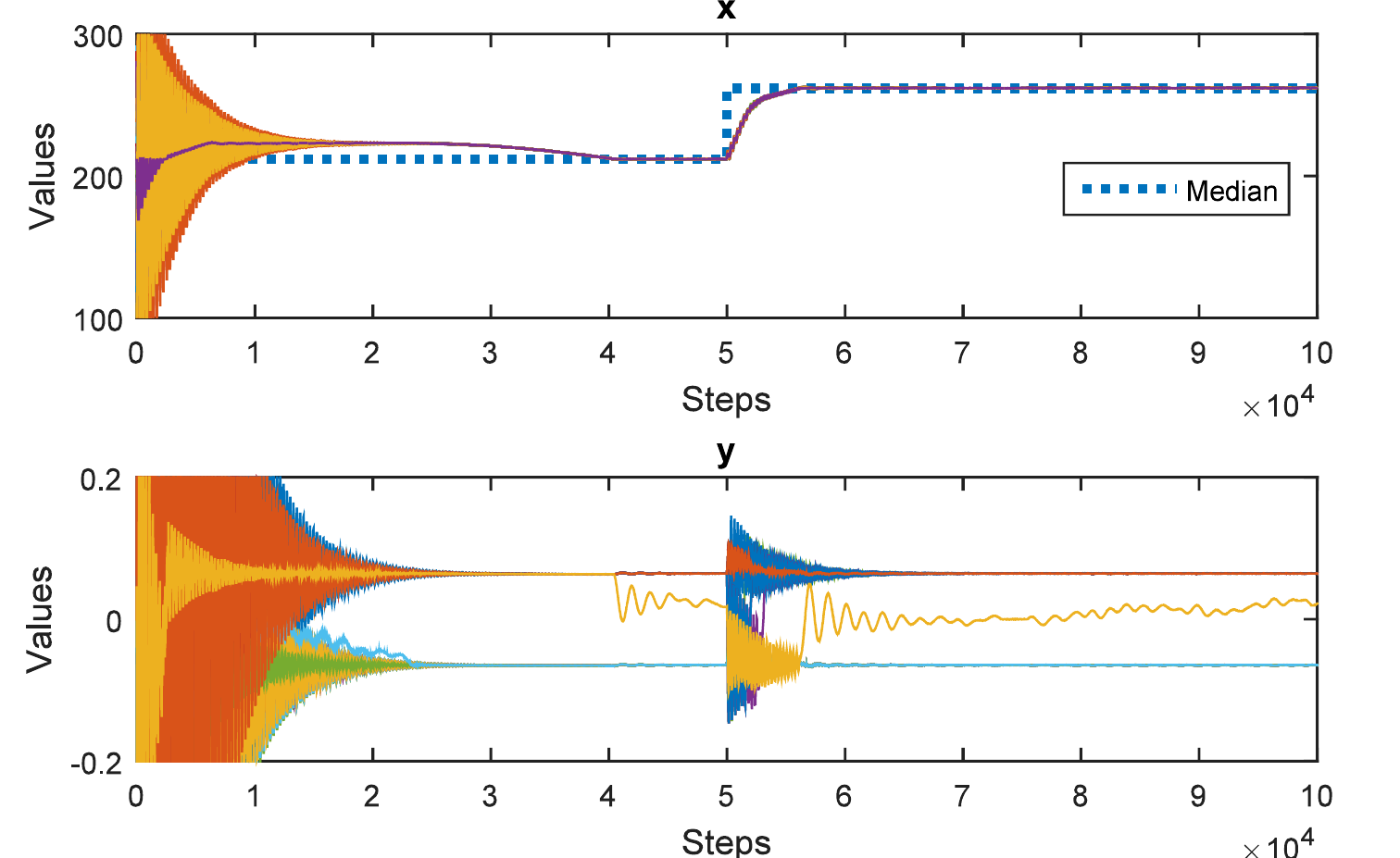}
    \caption{Simulation results for 31 agents (complete topology)}
    \label{fig:n31e1}
\end{figure}

\begin{figure}[h!]
\centering
	    \includegraphics[width=0.75\linewidth,trim=80 285 80 290, clip]	{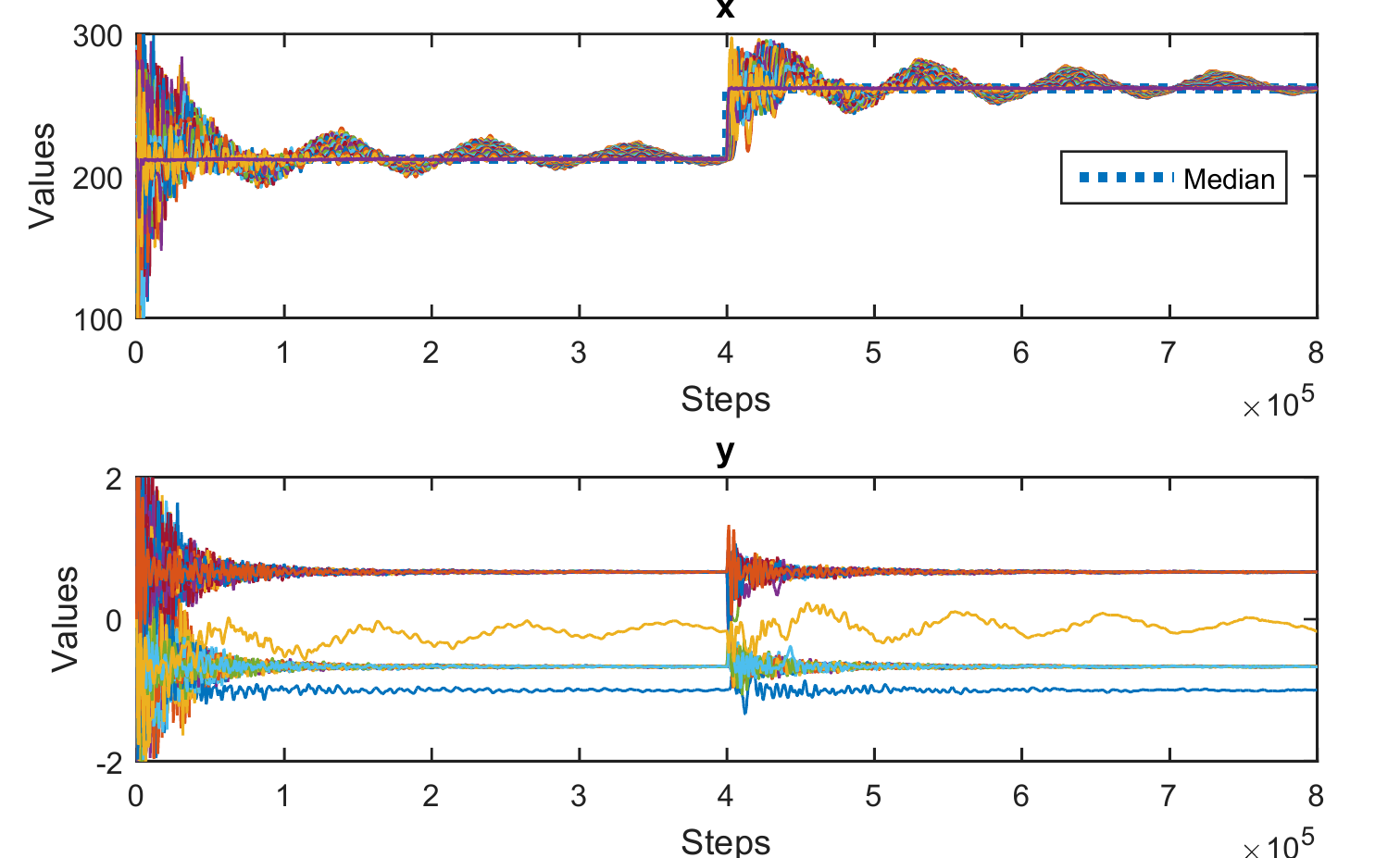}
    \caption{Simulation results for 31 agents (chain topology)}
     \label{fig:n31e2}
\end{figure}

Results for higher number of agents show that settling (convergence) time increases for higher number of agents. Range of values around the median will also influence the convergence time. In general, larger range leads to slower convergence, which is a consequence of the existence of the sign part in the equation for local interaction rule. 

The method was also tested for the case of the lost communication packages. Table \ref{tab:lostcomm} shows influence of random loss of transmitted packages on settling time $t_s$. Results were obtained by simulation for 5 agents with complete topology. For each percentage of lost packages, 100 simulations were run and the table presents mean settling time $t_{savg}$ and its standard deviation $\sigma(t_{s})$, as well as mean steady state error $\epsilon_{avg}$ and its standard deviation $\sigma(\epsilon)$. The results show that the system converges to the median of measured values, but it takes longer time with a higher percentage of lost messages. Simulations have shown that even with higher percentage of lost communication packages, the system converges towards the median of the measurements. Simulations have also shown that lost messages have no influence on the steady state error.

\begin{table}[]
    \centering
    \caption{Influence of the random loss of transmitted packages on the settling time, tested on 100 simulation for the case of 5 agents with complete topology}
    \begin{tabular}{|c|c|c|c|c|}
        \hline
         Lost packages & $t_{savg}$  & $\sigma(t_s)$ & $\epsilon_{avg}$ & $\sigma(\epsilon)$ \\
         \hline\hline
        0\% &	110 &	0  & 0.43\% & 0\%\\\hline
10\%	 & 964	& 1942 & 0.39\% & 0.035\%\\\hline
20\%	& 1527 &	2627 & 0.38\% & 0.037\%\\\hline
30\% &	1982 &	3303 & 0.38\% & 0.034\%\\\hline
40\%  &	2773 &	3831 & 0.37\% & 0.038\%\\\hline
50\% &	4587 &	5508 & 0.38\% & 0.041\%\\\hline
    \end{tabular}
    \label{tab:lostcomm}
\end{table}

System was further tested for sine-shaped measurements of each agent, representing a more dynamical measurement signal. Results ($x$, $y$), together with measurements for each of the 5 agents ($z$), are shown in Figure \ref{fig:sin}. After a transient period the system reaches consensus on the median value and is able to continuously track the median value of all measurements. The second half of the response shows the case of faster changing sinus function. The results show that the system is still able to track the median, but with a delay.

Results are given for an odd number of agents. For an even number, the convergence value is in the interval defined with \eqref{convergence}, and exact value depends on the protocol parameters and values of measured signals. The presented method works for both odd and even number of agents. However, results were presented only for an odd number of agents, since in this case the median is a single value, as opposed to system with even number of agents where median is a range of values.

\begin{figure}[h]
\centering
	    \includegraphics[width=0.65\linewidth,trim=110 230 110 230, clip]	{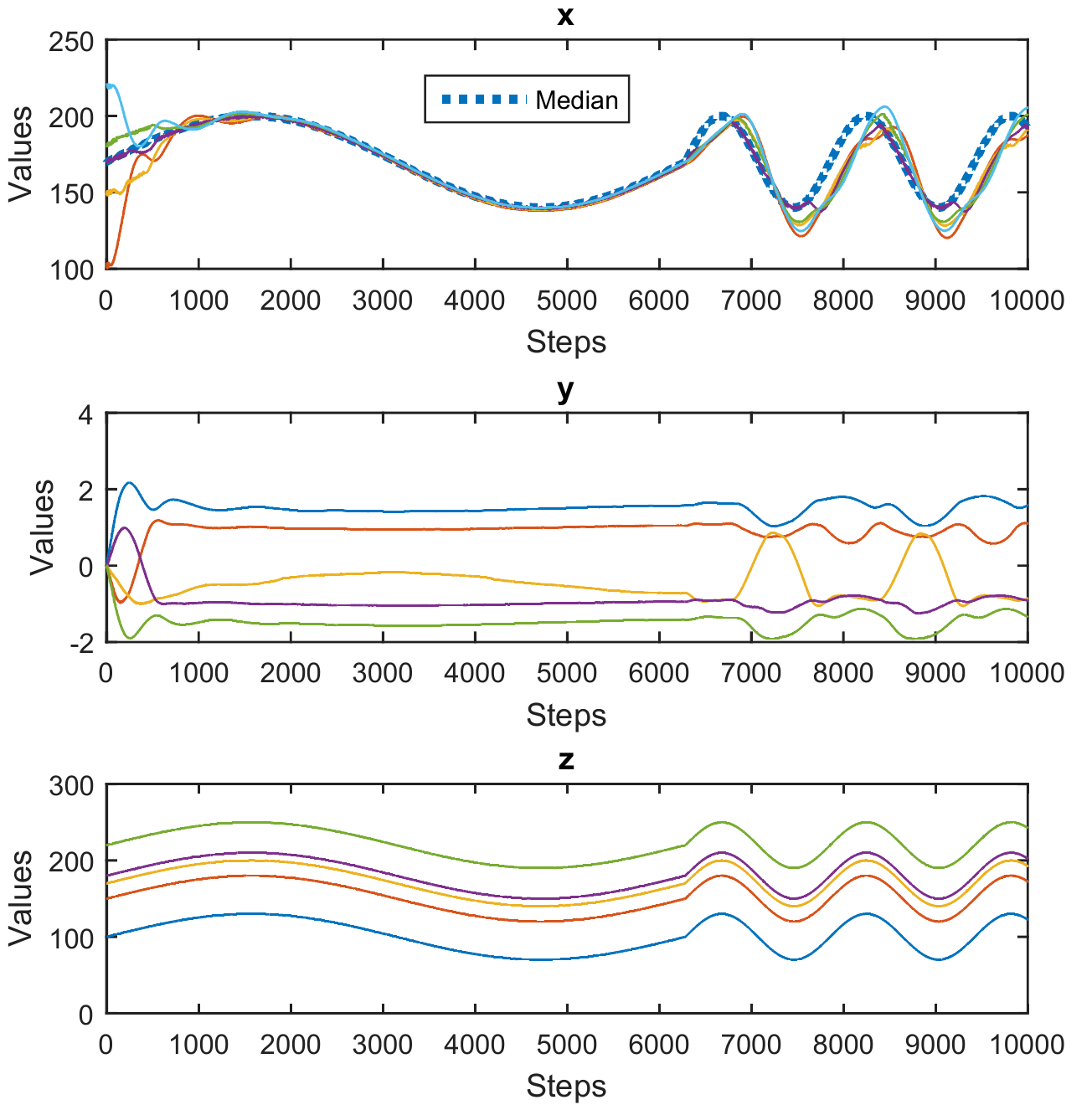}
    \caption{Simulation results for sine measurements}
     \label{fig:sin}
\end{figure}

\subsection{Analysis of tuning parameters}
Each of the algorithm tuning parameters in Table \ref{tab:variables} influences the behaviour of the system:
\begin{itemize}
    \item Increase of parameter $\alpha$ causes faster convergence, but according to the equation \eqref{eq:uvjet} increases the area of instability around the median value. 
    \item Increase of parameter $\beta$ shortens the time needed for agents to reach a common value (which initially does not have to correspond to the median).  
    \item With parameter $\gamma$  we influence the rate of change of the state $\mathbf{y}$, which then influences the convergence speed of $\mathbf{x}$. At the same time, its increase towards $\beta$ increases the instability of the system.
    \item $\kappa$ is the forgetting factor of the state $\mathbf{y}$, too large values cause instability of the system, and with too small values, system converges to the wrong value.
\end{itemize}
The influence of parameters on system response is shown in Figure \ref{fig:tuningparams}, which shows two graphs for each parameter change: i) initial start of response, which presents  the way the agents converge towards the median value and ii) steady state response, which presents the influence of the parameter to the steady state. Numerical results are shown in Table \ref{tab:tuning}. These results were obtained with 5 agents, using tuning parameters Sim 4 from Table \ref{tab:parametri}, where only one parameter changed its value.

\begin{table}
    \centering
    \caption{Influence of tuning parameters to responses}
    \begin{tabular}{|c|c|c|c|}
    \hline
         Parameter & $t_c$ & $t_s$ & $\epsilon_{ss}$ \\
         \hline
         \hline
         $\alpha=1$ & 1082 & 949 & 0.36\%\\
         \hline 
         $\alpha=3$ & 753 & 729 & 1.12\%\\
         \hline
         $\beta=0.01$ & 2659 & 1669 & 1.07\%\\
         \hline
         $\beta=0.08$ & 660 & 463 & 1.12\%\\
         \hline
         $\gamma=0.0015$ & 753 & 729 & 1.12\%\\
         \hline
         $\gamma=0.01$ & 1415 & 1044 & 1.12\%\\
         \hline
         $\kappa=0.02$ & 1047 & 679 & 0.69\%\\
         \hline
         $\kappa=0.4$ & 1639 & 794 & 4.12\%\\
        \hline
    \end{tabular}
    \label{tab:tuning}
\end{table}

\begin{figure}[h]
    \centering
    \includegraphics[width=0.49\linewidth,trim=50 230 50 250, clip]	{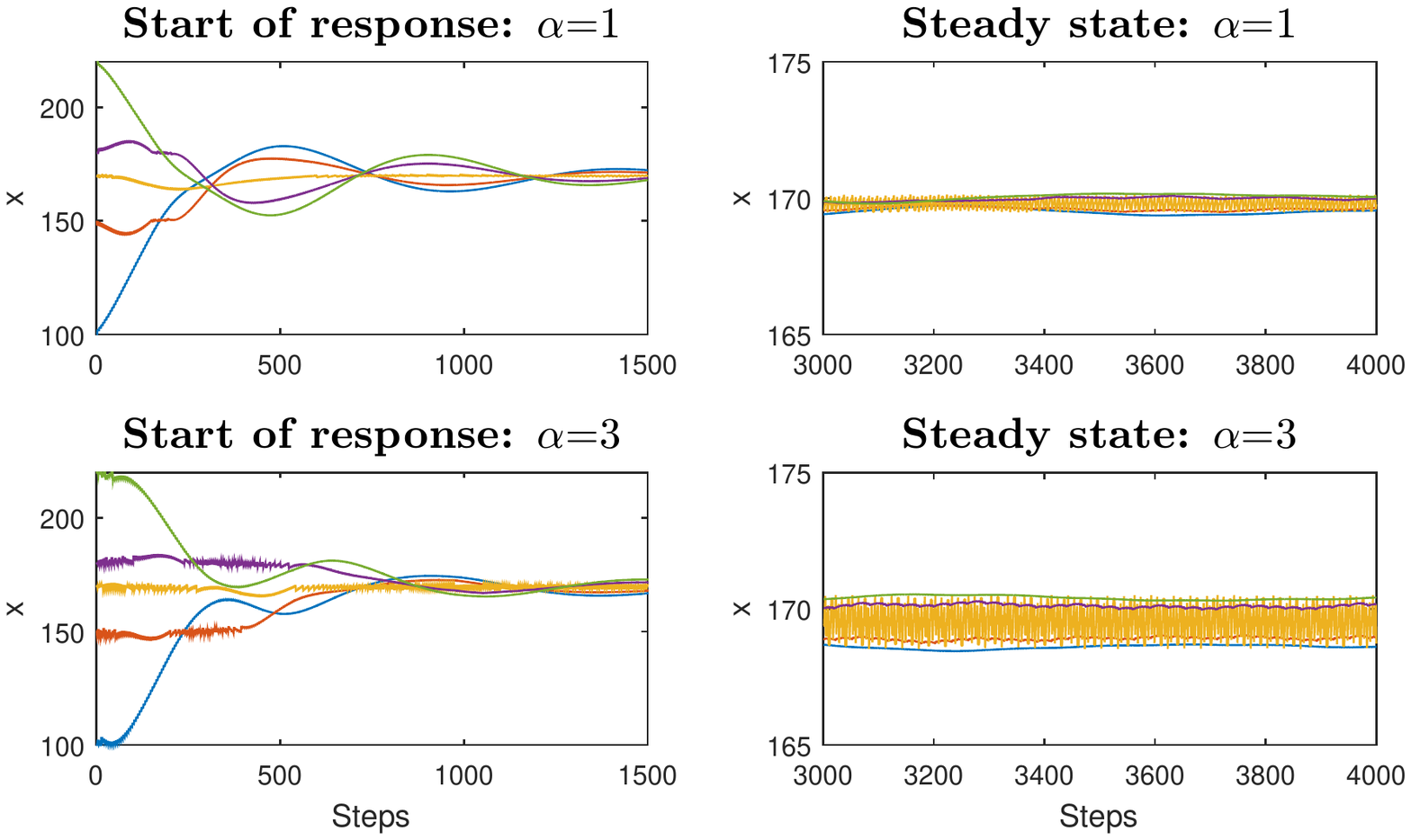}
    \includegraphics[width=0.49\linewidth,trim=50 230 50 250, clip]	{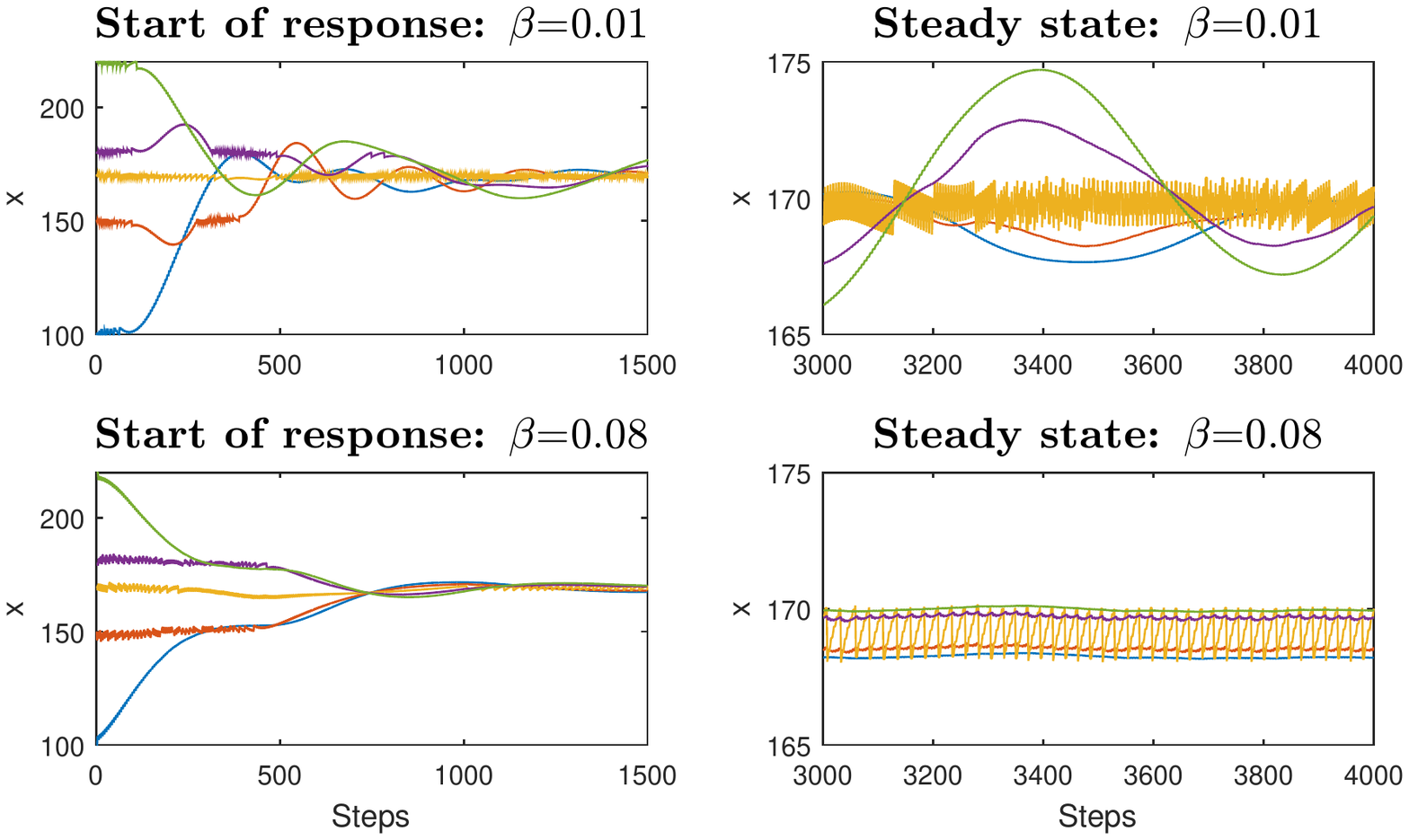}
    \includegraphics[width=0.49\linewidth,trim=50 230 50 250, clip]	{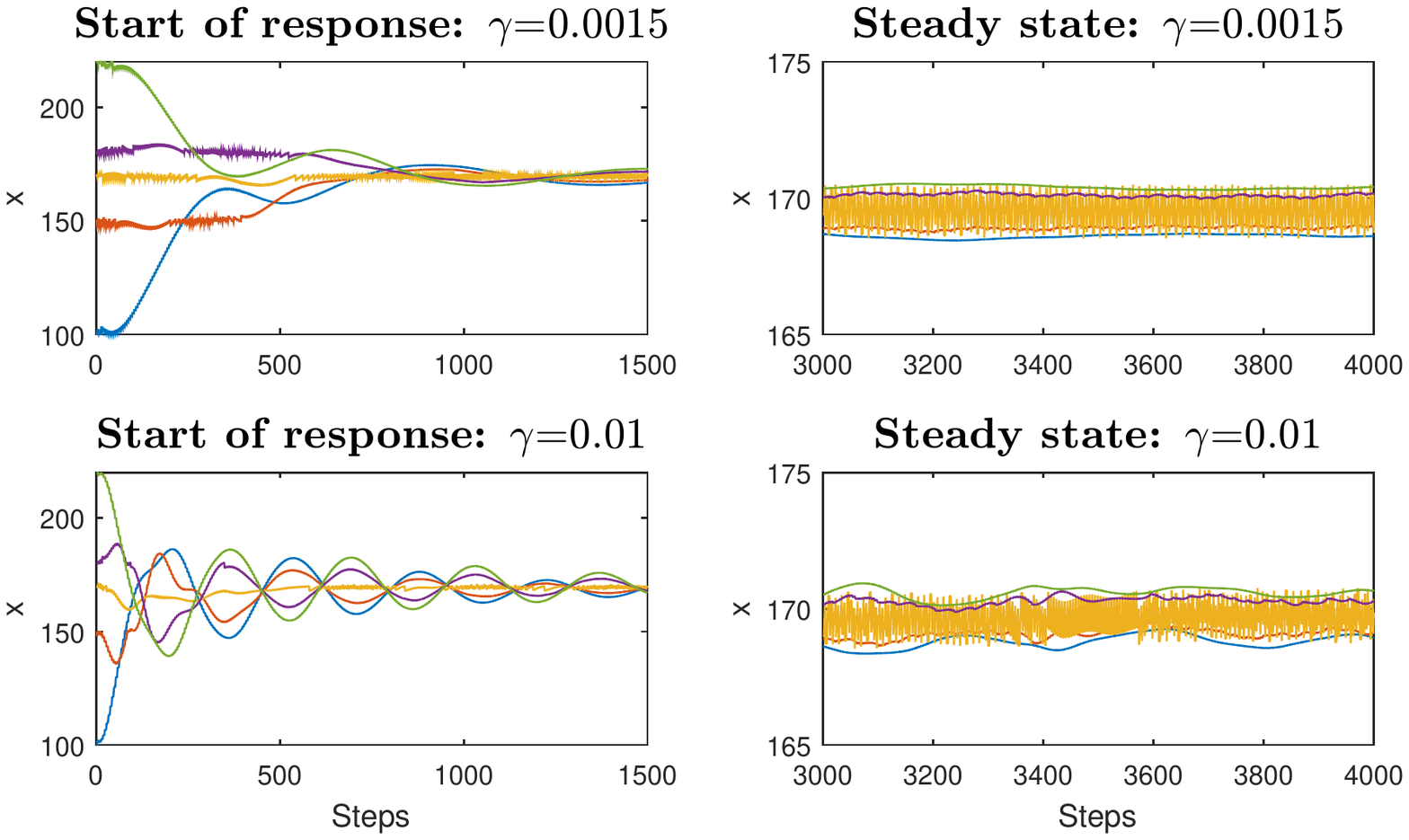}
    \includegraphics[width=0.49\linewidth,trim=50 230 50 250, clip]	{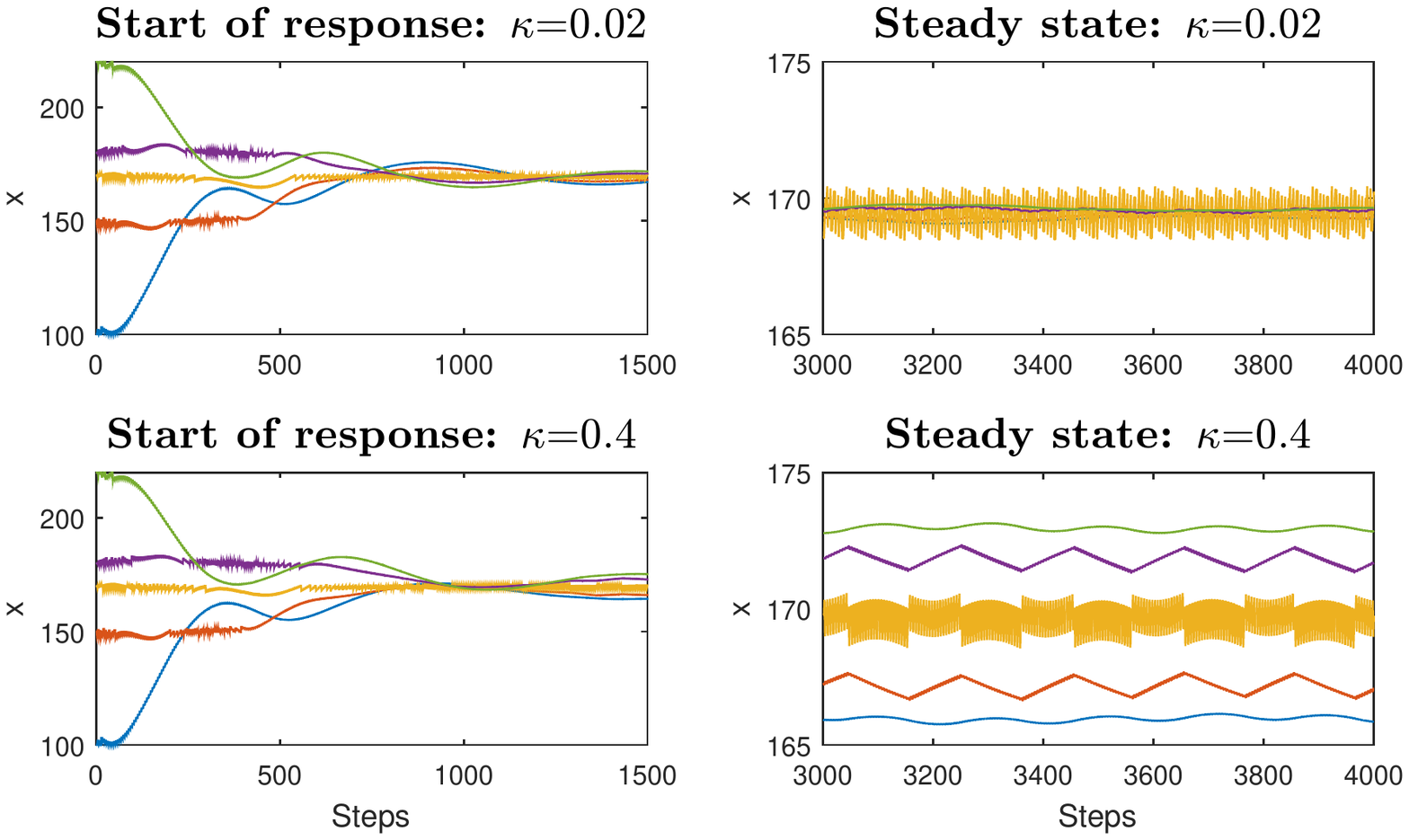}
 \caption{Influence of tuning parameters to response}
    \label{fig:tuningparams}
\end{figure}

\section{Experimental results}
\label{sec:experimental}

In this section we present the results of experiments performed with aMussels (Fig. \ref{fig:amussel}). The aMussel platform (described in \cite{Loncar2019}) is a robotic unit intended for long-term monitoring of underwater areas. It is designed for low power operations on sea-bed, where it takes measurements in regular intervals and goes to low power mode in between. It is equipped with various sensors used to perceive the environment, and two underwater communication devices to share acquired data and control signals with other robots. It is capable only of 1D movement, as it can change its depth using simple yet effective buoyancy system. The aMussel is powered from two single-cell LiPo batteries, where each of them powers different elements of the system. As a communication channel between agents (aMussels) in presented experiments, we have used a nanomodem acoustic unit. For a successful communication, only one of the aMussels in the group is allowed to transmit data using a nanomodem at single moment. For these reasons we have developed a time scheduling scheme, where each of the aMussels has an assigned timeslot in which it is allowed to transmit information using acoustics. Detail about time-scheduling implementation are described in \cite{Loncar2019} and in section \ref{sec:comm}.

\begin{figure}[t]
	\centering
	\includegraphics[width=0.55\linewidth,trim=0 90 0 90, clip]{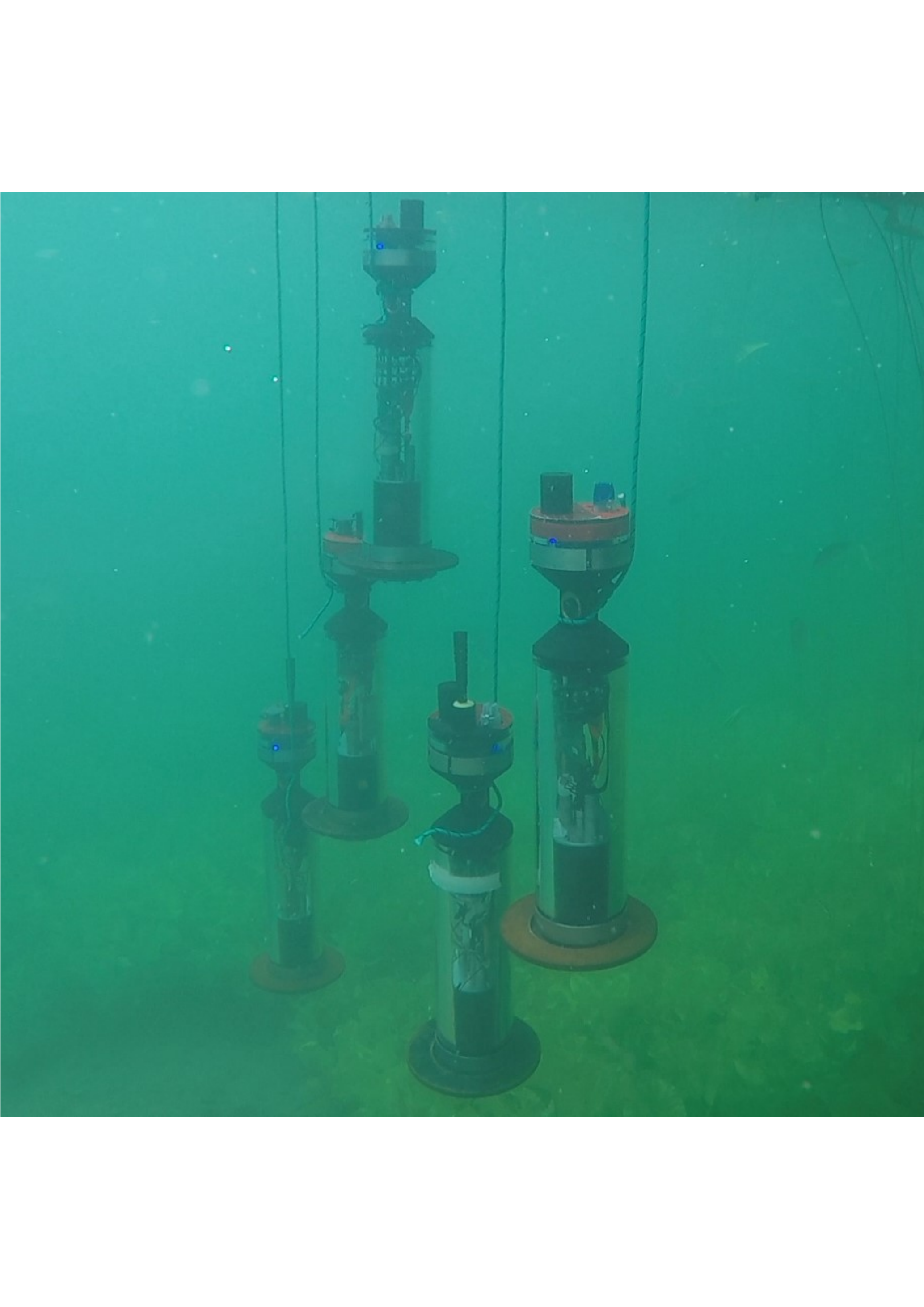}
	 \caption{Experiment setup using aMussel platforms}
                \label{fig:amussel}
\end{figure}
When on surface, aMussels can communicate either using WiFi, for data transfer, using Bluetooth, for diagnostics purposes, and using SMS, for sending short messages (like GPS coordinates) to greater distances. Each aMussel is also equipped with two underwater communication devices: green light, short-range communication device based on modulated light, and nanomodem, long-range acoustic communication device. In this work, we use only the later.

\subsection{Experiment Setup}
We have executed three experiments, one with 3 aMussels and two with 5 aMussels with the corresponding parameters in Table \ref{tab:parametri}. Since aMussels were close to each other, it is assumed that the graph is complete, corresponding to the connection matrix $\mathbf{A^1}$, but it is possible that some individual messages did not reach all the agents. 
In all experiments one communication step lasts for 5 seconds.

For the purpose of the experiment with 3 aMussels, measurements are kept constant until moment $k_{Step}$ when aMussel3 changes its measured value from 100 to 180, making a change in system median. This experiment was conducted with aMussels next to each other where for each aMussel, the measured value is a preprogrammed number that does not correspond to any physical value.

Experiments with 5 aMussels were conducted in Jarun lake in Zagreb. Each aMussel was tied to a rope with different length, so each of them was on different depth (Figure \ref{fig:amussel}). Since aMussels were placed close to each other on different depths, a significant loss of communication packages was expected. During the experiment, aMussels measured pressure in hPa, where one hPa corresponds approximately to one cm of depth. The surface pressure is around 1000 hPa. To enforce a change in measured values, the aMussel with the longest rope was moved towards the surface in the middle of the experiment, and thus became the aMussel with the smallest depth. The experiment goal was for aMussel to agree on the median of their depths (which corresponds to their measured pressure).

Additional experiment with 5 aMussels was conducted to show the influence of communication loss on the results. Same as in the previous experiments, aMussels were tied to a rope on different depths. After they reached consensus, one of them was removed from the water for 5 minutes and then returned to the same depth. After that, two aMussels were removed from the water and left to communicate between each other. After some time, they were returned to the water and placed on different depths. 

\subsection{Results}

Results of experiments with 3 aMussels (Figure \ref{fig:exp3}) show that values of individual aMussels converge towards the median values. After the measurements of the aMussel3 change, the median value of the group changes, which leads to convergence of individual aMussel values towards the new median value.  The results are slightly oscillatory compared to the simulation, because all values transferred over acoustics are quantizied. The experiment was run 4 times with similar results.

In the experiment with 5 aMussels, they first reach consensus which corresponds to the measurement of aMussel27 (Figure \ref{fig:exp5}). After the aMussel39 changes its depth, aMussels manage to reach the consensus around the measurement of aMussel18, which represents the new median value. The measured total loss of packages during the experiment was around 15\%, with aMussel39 having a loss of around 30\%. This experiment was run 2 times with similar results.

The results of communication loss experiment are shown in Figure \ref{fig:expcommloss}. After 600 steps aMussel27 was removed from the water for 5 minutes, and the results show that this short term communication loss did not influence the system. When aMussel31 and aMussel39 were removed from water and left to communicate only between each other, they started to agree on the consensus between only their measurements. After returning them into the water on different depths, all aMussels in the water started converging to the new median value. This experiment was run once.

\begin{figure}[h]
    \centering
    \includegraphics[width=0.7\linewidth,trim=100 300 110 300, clip]{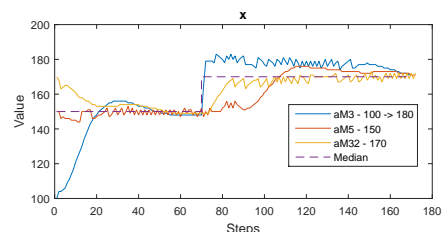}
    \caption{Experimental results for 3 aMussels}
        \label{fig:exp3}
\end{figure}

\begin{figure}[h]
    \centering
    \includegraphics[width=0.8\linewidth,trim=100 250 110 240, clip]{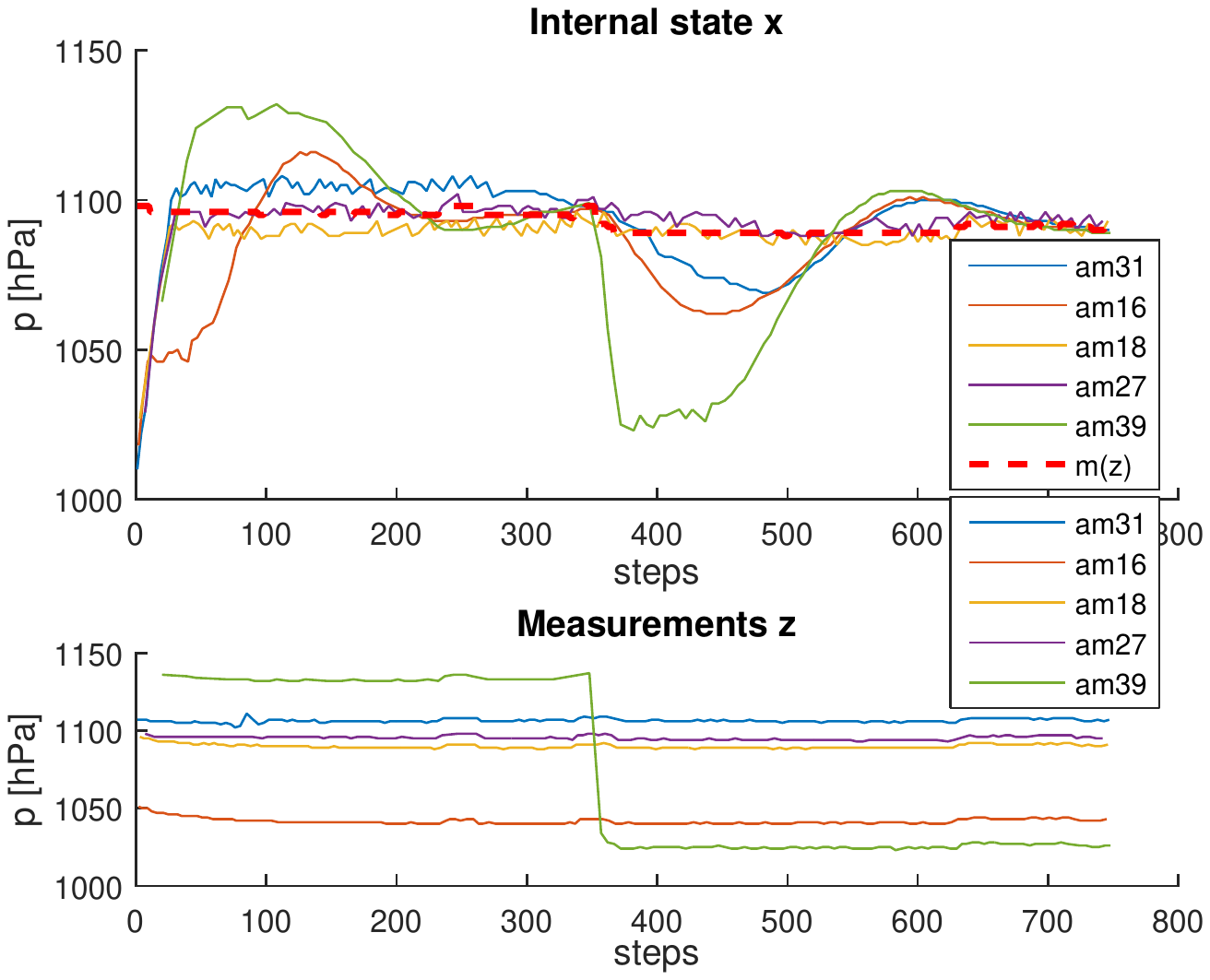}
    \caption{Experimental results for 5 aMussels}
        \label{fig:exp5}
\end{figure}

\begin{figure}[h]
    \centering
    \includegraphics[width=0.8\linewidth,trim=100 240 110 260, clip]{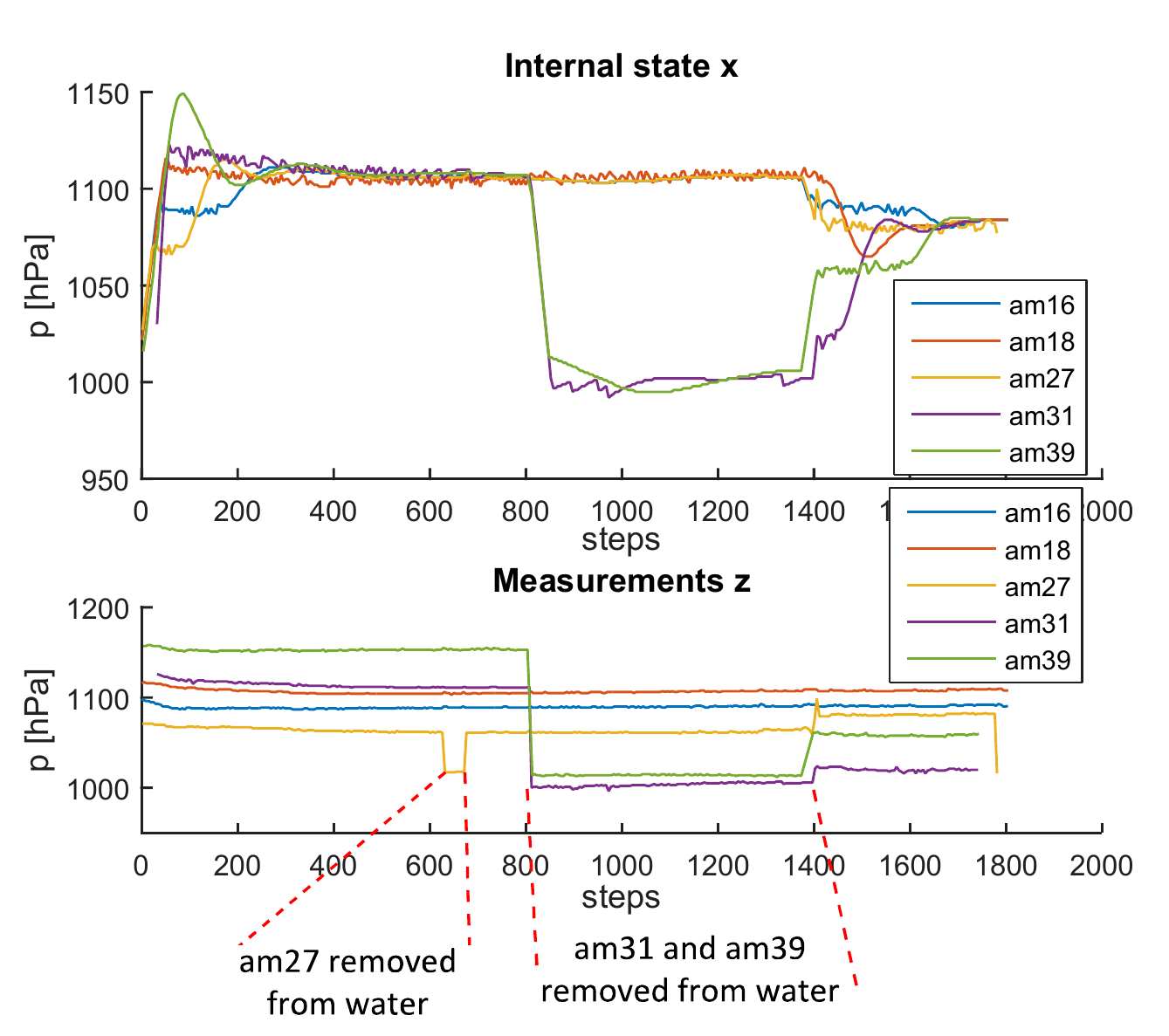}
    \caption{Experimental results for communication loss experiment}
        \label{fig:expcommloss}
\end{figure}

\section{Conclusion}
\label{sec:conclusion}

In this paper we have presented a method for determining the median value of measurements for a group of agents communicating using scheduled acoustic communication channel. The convergence of the protocol towards the median value is proven theoretically. In order to validate the presented protocol we tested it in a simulation setting, by creating a  model of the multi-agent system using scheduled communication. This model was used to gather simulation results of the presented dynamic median method for different number of agents, different connectivity matrices and tuning parameters.
We have also tested the presented method on the underwater robotic platforms aMussel, which are equipped with acoustic communication units. Simulation, as well as experimental results, have shown that the presented method converges towards the median of the measurements and that parameters of the consensus protocol can be tuned so that desired speed of convergence and accuracy of the system output obtain desired values. Results also show that the protocol work correctly even for a higher percentage of communication losses.

\addtolength{\textheight}{-12cm}   





\bibliographystyle{bibliography/IEEEtran}
\bibliography{IROS_RAL_2020_Median_Consensus.bib}

\begin{thebibliography}{10}
\providecommand{\url}[1]{#1}
\csname url@rmstyle\endcsname
\providecommand{\newblock}{\relax}
\providecommand{\bibinfo}[2]{#2}
\providecommand\BIBentrySTDinterwordspacing{\spaceskip=0pt\relax}
\providecommand\BIBentryALTinterwordstretchfactor{4}
\providecommand\BIBentryALTinterwordspacing{\spaceskip=\fontdimen2\font plus
\BIBentryALTinterwordstretchfactor\fontdimen3\font minus
  \fontdimen4\font\relax}
\providecommand\BIBforeignlanguage[2]{{%
\expandafter\ifx\csname l@#1\endcsname\relax
\typeout{** WARNING: IEEEtran.bst: No hyphenation pattern has been}%
\typeout{** loaded for the language `#1'. Using the pattern for}%
\typeout{** the default language instead.}%
\else
\language=\csname l@#1\endcsname
\fi
#2}}

\bibitem{subcultron}
R.~Thenius, D.~Moser, J.~C. Varughese, S.~Kernbach, I.~Kuksin, O.~Kernbach,
  E.~Kuksina, N.~Mi{\v{s}}kovi{\'{c}}, S.~Bogdan, T.~Petrovi{\'{c}},
  A.~Babi{\'{c}}, F.~Boyer, V.~Lebastard, S.~Bazeille, G.~W. Ferrari,
  E.~Donati, R.~Pelliccia, D.~Romano, G.~J.~V. Vuuren, C.~Stefanini,
  M.~Morgantin, A.~Campo, and T.~Schmickl, ``{subCULTron} - cultural
  development as a tool in underwater robotics,'' in \emph{Artificial Life and
  Intelligent Agents}.\hskip 1em plus 0.5em minus 0.4em\relax Springer
  International Publishing, 2018, pp. 27--41.

\bibitem{Arbanas20182}
B.~Arbanas, T.~Petrovic, and S.~Bogdan, ``Consensus protocol for underwater
  multi-robot system using two communication channels,'' in \emph{2018 26th
  Mediterranean Conference on Control and Automation ({MED})}.\hskip 1em plus
  0.5em minus 0.4em\relax {IEEE}, June 2018.

\bibitem{arbanas2018}
------, ``Consensus protocol for underwater multi-robot system using scheduled
  acoustic communication,'' in \emph{2018 {OCEANS} - {MTS}/{IEEE} Kobe
  Techno-Oceans ({OTO})}.\hskip 1em plus 0.5em minus 0.4em\relax {IEEE}, May
  2018.

\bibitem{babic2018}
A.~Babi{\'{c}}, F.~Mandi{\'{c}}, G.~Vasiljevi{\'{c}}, and
  N.~Mi{\v{s}}kovi{\'{c}}, ``Autonomous docking and energy sharing between two
  types of robotic agents,'' \emph{{IFAC}-{PapersOnLine}}, vol.~51, no.~29, pp.
  406--411, 2018.

\bibitem{Olfati2007}
R.~Olfati-Saber, J.~A. Fax, and R.~M. Murray, ``Consensus and cooperation in
  networked multi-agent systems,'' \emph{Proceedings of the IEEE}, vol.~95,
  no.~1, pp. 215--233, Jan 2007.

\bibitem{Ren2011}
W.~Ren and Y.~Cao, \emph{Distributed Coordination of Multi-agent
  Networks}.\hskip 1em plus 0.5em minus 0.4em\relax Springer London, 2011.

\bibitem{Qin2017}
\BIBentryALTinterwordspacing
J.~Qin, Q.~Ma, Y.~Shi, and L.~Wang, ``Recent advances in consensus of
  multi-agent systems: A brief survey,'' \emph{{IEEE} Transactions on
  Industrial Electronics}, vol.~64, no.~6, pp. 4972--4983, June 2017. [Online].
  Available: \url{https://doi.org/10.1109/tie.2016.2636810}
\BIBentrySTDinterwordspacing

\bibitem{mazdin2016trust}
P.~Mazdin, B.~Arbanas, T.~Haus, S.~Bogdan, T.~Petrovic, and N.~Miskovic,
  ``Trust consensus protocol for heterogeneous underwater robotic systems,''
  \emph{IFAC-PapersOnLine}, vol.~49, no.~23, pp. 341--346, 2016.

\bibitem{haus2014trust}
T.~Haus, I.~Palunko, D.~Toli{\'c}, S.~Bogdan, F.~L. Lewis, and D.~G. Mikulski,
  ``Trust-based self-organising network control,'' \emph{IET Control Theory \&
  Applications}, vol.~8, no.~18, pp. 2126--2135, 2014.

\bibitem{Jadbabaie2003}
A.~Jadbabaie, J.~Lin, and A.~Morse, ``Coordination of groups of mobile
  autonomous agents using nearest neighbor rules,'' \emph{{IEEE} Transactions
  on Automatic Control}, vol.~48, no.~6, pp. 988--1001, June 2003.

\bibitem{ReanBeard2008}
W.~Ren and R.~Beard, \emph{Distributed Consensus in Multivehicle Cooperative
  Control: Theory and Applications}.\hskip 1em plus 0.5em minus 0.4em\relax
  London, UK: Springer, 2008.

\bibitem{Franceschelli2014}
M.~Franceschelli, A.~Giua, and A.~Pisano, ``Finite-time consensus on the median
  value by discontinuous control,'' in \emph{2014 American Control
  Conference}.\hskip 1em plus 0.5em minus 0.4em\relax {IEEE}, June 2014.

\bibitem{Franceschelli2017}
------, ``Finite-time consensus on the median value with robustness
  properties,'' \emph{{IEEE} Transactions on Automatic Control}, vol.~62,
  no.~4, pp. 1652--1667, Apr. 2017.

\bibitem{Kia2019}
S.~S. Kia, B.~V. Scoy, and J.~C. et~al., ``Tutorial on dynamic average
  consensus: The problem, its applications, and the algorithms,'' \emph{{IEEE}
  Control Systems}, vol.~39, no.~3, pp. 40--72, June 2019.

\bibitem{Zhu2008}
M.~Zhu and S.~Martinez, ``Dynamic average consensus on synchronous
  communication networks,'' in \emph{2008 American Control Conference}.\hskip
  1em plus 0.5em minus 0.4em\relax {IEEE}, June 2008.

\bibitem{Joordens2010}
M.~Joordens and B.~Champion, ``Underwater swarm robotics,'' in \emph{Handbook
  of Research on Design, Control, and Modeling of Swarm Robotics}.\hskip 1em
  plus 0.5em minus 0.4em\relax {IGI} Global, pp. 718--740.

\bibitem{Putranti2016}
V.~Putranti, Z.~H. Ismail, and T.~Namerikawa, ``Robust-formation control of
  multi-autonomous underwater vehicles based on consensus algorithm,'' in
  \emph{2016 {IEEE} Conference on Control Applications}, 2016.

\bibitem{Mirzaei2016}
M.~Mirzaei, N.~Meskin, and F.~Abdollahi, ``Robust distributed consensus of
  autonomous underwater vehicles in 3d space,'' in \emph{2016 International
  Conference on Control, Decision and Information Technologies
  ({CoDIT})}.\hskip 1em plus 0.5em minus 0.4em\relax {IEEE}, Apr. 2016.

\bibitem{Yan2017}
J.~Yan, B.~Pu, X.~Tian, X.~Luo, X.~Li, and X.~Guan, ``Distributed target
  tracking using consensus-based bayesian filtering by underwater acoustic
  sensor networks,'' in \emph{2017 36th Chinese Control Conference}.\hskip 1em
  plus 0.5em minus 0.4em\relax {IEEE}, July 2017.

\bibitem{OSStallings}
W.~Stallings, \emph{Operating Systems: Internals and Design Principles},
  6th~ed.\hskip 1em plus 0.5em minus 0.4em\relax Upper Saddle River, NJ, USA:
  Prentice Hall Press, 2008.

\bibitem{Loncar2019}
I.~Lon{\v{c}}ar, A.~Babi{\'{c}}, B.~Arbanas, G.~Vasiljevi{\'{c}},
  T.~Petrovi{\'{c}}, S.~Bogdan, and N.~Mi{\v{s}}kovi{\'{c}}, ``A heterogeneous
  robotic swarm for long-term monitoring of marine environments,''
  \emph{Applied Sciences}, vol.~9, no.~7, p. 1388, Apr. 2019.

\end{thebibliography}

\end{document}